%% file: polaradamw.tex
\documentclass{article}

\usepackage[margin=1in]{geometry}
\usepackage{hyperref}
\hypersetup{
  colorlinks=true,
  linkcolor=NavyBlue,
  citecolor=NavyBlue,
  urlcolor=NavyBlue,
}
\usepackage{mmll-2}
\crefname{section}{Section}{Sections}
\Crefname{section}{Section}{Sections}
\crefname{appendix}{Appendix}{Appendices}
\Crefname{appendix}{Appendix}{Appendices}
\crefname{figure}{Figure}{Figures}
\Crefname{figure}{Figure}{Figures}
\crefname{table}{Table}{Tables}
\Crefname{table}{Table}{Tables}
\crefname{theorem}{Theorem}{Theorems}
\Crefname{theorem}{Theorem}{Theorems}
\crefname{lemma}{Lemma}{Lemmas}
\Crefname{lemma}{Lemma}{Lemmas}
\crefname{proposition}{Proposition}{Propositions}
\Crefname{proposition}{Proposition}{Propositions}
\crefname{remark}{Remark}{Remarks}
\Crefname{remark}{Remark}{Remarks}
\crefname{equation}{Eq.}{Eqs.}
\Crefname{equation}{Eq.}{Eqs.}

\newcommand{\polar}{\operatorname{polar}}
\newcommand{\Hom}{\operatorname{Hom}}
\newcommand{\Frob}{\mathrm{F}}
\newcommand{\R}{\mathbb{R}}

\title{PolarAdamW: Disentangling Spectral Control and Schur Gauge-Equivariance
       in Matrix Optimisation}

\author{%
  Haozhou Zhang \\
  Department of Mathematics and Statistics \\
  University of Idaho \\
  \texttt{zhan3063@vandals.uidaho.edu}
}

\begin{document}
\maketitle

\begin{abstract}
Muon's matrix-level update couples two distinct effects: spectral
control via a polar map, and equivariance under orthogonal changes of
multiplicity-space basis (Schur gauge-equivariance). We separate them
with PolarAdamW, a controlled hybrid that preserves Muon's polar
spectral-norm control but breaks the gauge-equivariance, since AdamW's
coordinatewise preconditioner is basis-dependent. Algorithmically,
PolarAdamW applies Muon's Newton--Schulz polar map to AdamW's
preconditioned direction rather than to raw momentum, at per-iteration
wall-time comparable to Muon. We prove that Muon's polar step is Schur
gauge-equivariant on multiplicity matrices while AdamW's coordinatewise
step is not.

On DeiT-Tiny trained from scratch on four independently sampled
100-class subsets of ImageNet-1k, where multiplicity-basis freedom is
trivial, PolarAdamW outperforms Muon by $+1.93$ pp in test accuracy on
average and AdamW by $+9.5$ pp; under the 300-epoch DeiT-style recipe,
it remains ahead of Muon by $+1.37$ pp and AdamW by $+5.80$ pp on
average. On SO(3)-equivariant 3D point-cloud regression, where
multiplicity-basis freedom is non-trivial, the ordering reverses: Muon
outperforms PolarAdamW at every audited capacity, and the gap widens
with capacity. Both matrix-polar optimisers continue to outperform
AdamW. This double dissociation separates spectral control from Schur
gauge-equivariance: the first composes well with AdamW preconditioning
on standard transformers, while the second becomes consequential when
multiplicity-basis freedom is structurally non-trivial.
\end{abstract}

\section{Introduction}
\label{sec:setting}

While adaptive optimisers like
AdamW~\citep{loshchilov2019decoupledweightdecayregularization}
precondition each parameter coordinate by its second-moment estimate, matrix-structured
polar optimisers like Muon~\citep{Jordan2024} apply a polar map to each
update matrix, controlling its spectral norm. Both are strong alternatives
to stochastic gradient descent (SGD) with momentum on transformer
training, but have been developed as separate lines of research.

Whether the two mechanisms (per-coordinate adaptation and matrix-level
spectral control) compete or compose remains unsettled. We introduce
\textbf{PolarAdamW} as a controlled artefact: it applies Muon's polar
map to AdamW's preconditioned direction rather than to raw momentum,
at per-iteration wall-time comparable to Muon. Muon polarises a
momentum matrix directly; PolarAdamW first forms the
AdamW-preconditioned direction, whose entries have already been
rescaled by coordinatewise second-moment statistics, and then applies
the same polar map. The two optimisers share matrix-level spectral
control but differ in whether coordinatewise adaptive preconditioning
is allowed to shape the matrix direction before polarisation.

This contrast sets up a double dissociation (a pair of regimes where
each mechanism, taken alone, predicts the opposite ordering): if
coordinatewise preconditioning is the useful piece, PolarAdamW should
outperform Muon on ordinary transformer matrices, where
multiplicity-space basis freedom is trivial; if basis-equivariance is
the useful piece, the PolarAdamW advantage should weaken or reverse
on equivariant architectures, where this basis freedom is non-trivial.

The first prediction holds. Training
DeiT-Tiny~\citep{touvron2021trainingdataefficientimagetransformers}
from scratch on four independently sampled 100-class subsets of
ImageNet-1k, PolarAdamW outperforms Muon by $+1.93$ pp test accuracy
on average (4-seed paired, all four differences positive) and AdamW
by $+9.5$ pp. Under the 300-epoch DeiT-style recipe, the lead is preserved across
the same four seeds:
PolarAdamW remains ahead of Muon by $+1.37$ pp on average and ahead
of AdamW by $+5.80$ pp, with all four PolarAdamW $-$ Muon paired
differences positive.

The second prediction also holds. On an SO(3)-equivariant 3D
point-cloud regression testbed, in a paired-seed audit across
capacities $h_c \in \{16, 32, 64, 128\}$, Muon outperforms PolarAdamW
at every audited capacity, and the gap widens monotonically with
capacity. Both matrix-polar optimisers continue to outperform AdamW.
This double dissociation supports the regime-wise separation:
coordinatewise preconditioning is beneficial when the gauge is trivial,
while basis-equivariance becomes consequential when it is not.

We map out the rest of the paper as follows.
\cref{sec:related} relates PolarAdamW to adaptive
diagonal methods, post-Muon variants, and equivariant network design.
\cref{sec:polaradamw} introduces the optimiser.
\cref{sec:experiments-transformer} reports the DeiT-Tiny
experiments.
\cref{sec:schur-setting,sec:gauge-equivariance} develop the Schur multiplicity
formalism and prove the main equivariance results:
\cref{thm:basis-equiv} for Muon's polar step and
\cref{prop:adamw-not-equiv} for AdamW's coordinatewise step.
\cref{sec:experiments-equivariant} reports the SO(3)
experiments. \cref{sec:open} discusses scope and limitations.

\section{Related and Concurrent Work}
\label{sec:related}

PolarAdamW sits between two optimiser threads that have largely been studied
separately: adaptive coordinatewise preconditioning (the AdamW side) and
matrix-structured polar optimisers (the Muon side). We also review
equivariant network design, where multiplicity-space basis freedom makes the
distinction between these optimiser mechanisms concrete. We address each in
turn, marking the basis-dependence properties that make their interaction
non-trivial.

\subsection{Adaptive diagonal methods}
\label{sec:related-adamw}

Coordinatewise adaptive preconditioning is the dominant lever in modern
training: Adam~\citep{kingma2015adam} divides
the update by a per-coordinate second-moment estimate, and
AdamW~\citep{loshchilov2019decoupledweightdecayregularization} decouples
weight decay from the Adam preconditioner. Recent analyses note that
this preconditioner is basis-dependent: the same training problem in
different parameter bases can produce measurably different
trajectories~\citep{liu2025adamgaussnewtoncomparativestudy,zhang2025understandingadamrequiresbetter}.
In ordinary architectures the basis is fixed by the parameterisation
and the basis-dependence is invisible; in Schur multiplicity spaces it
becomes a structural property of the optimiser rather than of the
architecture.

\subsection{Matrix-structured polar optimisers}
\label{sec:related-polar}

Muon~\citep{Jordan2024} replaces the per-coordinate rescale with a polar
map applied to the matrix-valued momentum, computed via a Newton--Schulz
iteration. Two analyses motivate the polar step:
\citet{Bernstein2025Deriving} derives it from an output-perturbation
argument on a single linear layer, with the RMS-to-RMS operator norm
of $\Delta W$ controlling the induced output change; the Modula
documentation~\citep{Modula} develops the spectral-norm/Stiefel-manifold
view of Muon-style updates and their retractions. Polar
Express~\citep{amsel2025polarexpressoptimalmatrix} replaces the fixed
Newton--Schulz coefficients with minimax-optimised polynomial updates
for the polar decomposition.
NS-RGS~\citep{peng2026nsrgsnewtonschulzbasedriemannian} uses
Newton--Schulz as an approximate polar projector inside a Riemannian
gradient scheme for orthogonal-group synchronisation.
Shampoo~\citep{gupta2018shampoopreconditionedstochastictensor} is an
earlier matrix preconditioner with a different mechanism:
Kronecker-factored accumulated second moments, rather than a per-step
polar map of the momentum.
SOAP~\citep{vyas2025soap} combines Shampoo-style Kronecker-factored
preconditioning with Adam in the preconditioner eigenbasis; in
contrast, PolarAdamW combines AdamW preconditioning with a polar
normalisation of the update matrix, so the two methods instantiate
the same broad ``adaptive + matrix-structured'' design pattern using
different matrix primitives.

\textbf{Concurrent Muon variants.}
Concurrent variants of Muon split into post-polar rescaling and
pre-polar preconditioning. Post-polar variants modify the magnitude
or normalisation after orthogonalisation:
PolarGrad~\citep{lau2025polargradclassmatrixgradientoptimizers} adds
nuclear-norm scaling, NorMuon~\citep{li2025normuonmakingmuonefficient}
adds row-wise second-moment normalisation, and
AdaMuon~\citep{si2025adamuonadaptivemuonoptimizer} applies element-wise
adaptive scaling to orthogonalised directions together with a
sign-stabilised orthogonal update. Pre-polar variants modify the
matrix fed to the polar map: FISMO~\citep{xu2026fismo} uses
Kronecker-factored Fisher structure, Mousse~\citep{zhang2026mousse}
uses Shampoo-style Kronecker-factored whitening,
MuonEq~\citep{chang2026muoneq} applies row/column equilibration before
Newton--Schulz, and Muon$^2$~\citep{liu2026muonsquared} applies
Adam-style second-moment preconditioning before orthogonalisation.
PolarAdamW is closest in design to Muon$^2$: both apply Adam-style
second-moment preconditioning before orthogonalisation, and the two
were developed independently. Muon$^2$ provides independent empirical
evidence on language-model pretraining that pre-polar Adam-style
preconditioning can improve over Muon. Here we use this design axis
for a different purpose: PolarAdamW serves as a controlled
AdamW-style intervention for the disentangling argument of
\cref{sec:gauge-equivariance}, preserving the polar spectral-control
mechanism while introducing a coordinatewise preconditioner (the
same one Muon$^2$ introduces) that is not equivariant under
orthogonal multiplicity-basis changes. These
works optimise the single-matrix update geometry; they do not analyse
covariance under Schur multiplicity-basis changes, which is the
mechanism isolated here.

\subsection{Equivariant networks and basis-equivariance}
\label{sec:related-equiv}

Group-equivariant
architectures~\citep{cohen2016groupequivariantconvolutionalnetworks,weiler2019generale2equivariantsteerablecnns,zhdanov2024cliffordsteerableconvolutionalneuralnetworks},
including tensor-product networks for atomistic systems (MACE,
NequIP)~\citep{batatia2022macehigherorderequivariant,Batzner_2022},
use equivariant parameterisations in which equivariant linear maps
reduce, after irreducible decomposition, to trainable
multiplicity-space blocks. Across this literature the optimiser is treated as a black box:
the constraint analysis is on the kernel parameterisation, and Adam or
AdamW is used by default. The closest formal neighbour is
\citet{nordenfors2025optimizationdynamicsequivariantaugmented}, who
prove that the gradient flow of a unitarily parameterised augmented
model preserves the equivariant subspace;
\cref{thm:basis-equiv} is the discrete-update analogue, showing
that the polar step is Schur gauge-equivariant under orthogonal change of the
multiplicity-space basis whereas coordinatewise AdamW preconditioning
depends on that basis. In this paper, equivariance enters on the
optimiser side: the SO(3) model supplies multiplicity-space matrices
on which basis-equivariance of the update is meaningful, and PolarAdamW
separates this property from spectral control.

\section{PolarAdamW: disentangling spectral control and Schur gauge-equivariance}
\label{sec:polaradamw}

PolarAdamW applies the same Newton--Schulz polar map as Muon, but to
the AdamW-preconditioned direction rather than to raw momentum. This
preserves the polar spectral-control mechanism while making the update
coordinate-preconditioned. The resulting loss of Schur
gauge-equivariance is analysed in
\cref{sec:gauge-equivariance}.

The three optimisers compared in this paper share the same auxiliary
AdamW step on all parameters outside the eligible matrix-structured
split (embeddings, LayerNorm parameters, biases, positional/class
tokens, patch embedding, and the classifier head). They differ only
in the update applied to the eligible two-dimensional weight matrices
inside transformer blocks.
For each two-dimensional weight matrix $W$, let
$g_t = \nabla_W \mathcal L_t$ be the mini-batch gradient at step $t$.
For Muon, $m_t = \mu\, m_{t-1} + g_t$ is heavy-ball momentum with
coefficient $\mu$. For AdamW and PolarAdamW, $m_t, v_t$ are the Adam
first and second moments with coefficients $\beta_1, \beta_2$, and
$\hat m_t = m_t/(1-\beta_1^t)$, $\hat v_t = v_t/(1-\beta_2^t)$ their
bias-corrected forms. The per-step direction $D_t$ on a 2D weight is:
\begin{align}
\textbf{Muon:}\quad      & D_t^{\mathrm{Muon}}       \;=\; \mathrm{NS}_5\!\left(g_t + \mu\, m_t\right); \label{eq:muon-update}\\[2pt]
\textbf{AdamW:}\quad     & D_t^{\mathrm{AdamW}}      \;=\; \frac{\hat m_t}{\sqrt{\hat v_t}+\varepsilon}; \label{eq:adamw-update}\\[2pt]
\textbf{PolarAdamW:}\quad& D_t^{\mathrm{PolarAdamW}} \;=\; \mathrm{NS}_5\!\left(\frac{\hat m_t}{\sqrt{\hat v_t}+\varepsilon}\right). \label{eq:polaradamw-update}
\end{align}
The matrix-weight update is $W \gets (1-\eta\lambda)W - \eta\, s\, D_t$,
with decoupled weight decay $\lambda$ on AdamW and PolarAdamW
matrix weights; for Muon's matrix step we set $\lambda = 0$ (Nesterov
momentum, matching the default Muon matrix-step setting in the
reference implementation~\citep{Jordan2024}).
The auxiliary AdamW step on parameters outside the eligible
matrix-structured split uses decoupled weight decay in all three arms;
\cref{sec:experiments-equivariant} reports a $\lambda = 0.05$
Muon audit on the SO(3) testbed.
For Muon and PolarAdamW we use the shape-dependent magnitude scale
$s = \sqrt{\max(n,m)/\min(n,m)}$ so that per-step Frobenius norms are
comparable across matrix shapes, the quintic Newton--Schulz
coefficients $(a,b,c) = (3.4445, -4.7750, 2.0315)$, and $k = 5$
iterations on $M / \|M\|_\Frob$ in bf16.

PolarAdamW forfeits Schur gauge-equivariance because the element-wise
denominator $\sqrt{\hat v_t}$ in equation~\cref{eq:polaradamw-update}
depends on the multiplicity-space basis: an orthogonal change of basis
rotates the entries of $\hat v_t$, and so changes the per-coordinate
rescaling. Muon's polar step has no such denominator and remains
Schur gauge-equivariant. The transformer and SO(3) experiments below
test the two sides empirically.

\section{Transformer experiments}
\label{sec:experiments-transformer}

The DeiT transformer blocks we study do not carry a nontrivial
Schur multiplicity gauge, so Schur gauge-equivariance cannot drive a
PolarAdamW gain over Muon here; such gains, if observed, would reflect
AdamW-style coordinatewise preconditioning applied before the polar
map. We test for such a gain on DeiT-Tiny trained on
random 100-class subsets of ImageNet-1k, with four paired seeds under
two recipes: a minimal-augmentation $100$-epoch recipe and a
$300$-epoch DeiT-style recipe (\cref{app:recipe-audit,app:recipe-data}). PolarAdamW outperforms Muon on all four seeds
under both recipes (\cref{tab:deit-minimal}).

We report \emph{mean-last-10 accuracy} (L10), the mean test accuracy
over the final $10$ epochs of each run. Each random 100-class subset
contains only about $5000$ test images, so single-epoch accuracy is
noisy and L10 is a more stable summary than final-epoch accuracy.

\begin{table}[ht]
\centering
\small
\begin{tabular}{lccc|ccc}
\toprule
& \multicolumn{3}{c|}{$100$ ep, minimal recipe}
& \multicolumn{3}{c}{$300$ ep, DeiT-style recipe} \\
seed & AdamW & Muon & PolarAdamW & AdamW & Muon & PolarAdamW \\
\midrule
$42$        & $63.08$ & $71.13$ & $\mathbf{73.11}$ & $79.80$ & $83.72$ & $\mathbf{85.03}$ \\
$7$         & $63.73$ & $72.17$ & $\mathbf{73.37}$ & $79.95$ & $83.82$ & $\mathbf{85.42}$ \\
$123$       & $64.56$ & $71.53$ & $\mathbf{73.75}$ & $79.42$ & $84.97$ & $\mathbf{86.10}$ \\
$2024$      & $64.49$ & $71.31$ & $\mathbf{73.65}$ & $80.75$ & $85.14$ & $\mathbf{86.56}$ \\
\midrule
mean        & $63.97$ & $71.53$ & $\mathbf{73.47}$ & $79.98$ & $84.41$ & $\mathbf{85.78}$ \\
$\Delta$(P$-$M)
            & ---     & ---     & $\mathbf{+1.93}$ & ---     & ---     & $\mathbf{+1.37}$ \\
\bottomrule
\end{tabular}
\caption{L10 test accuracy on DeiT-Tiny under two recipes (random
$100$-class ImageNet-1k subsets, four paired seeds). PolarAdamW
outperforms Muon by $+1.93$ pp at $100$ epochs and $+1.37$ pp at
$300$ epochs ($4/4$ seeds positive in both regimes). Both matrix-polar
optimisers outperform AdamW by $7$--$10$ pp.}
\label{tab:deit-minimal}
\end{table}

\paragraph{Robustness.}
Two concerns deserve checking. First, the random 100-class subset
protocol means absolute accuracies are subset-conditional, not
directly comparable to ImageNet-100 benchmarks; paired within-subset
gaps are the primary measurement (\cref{sec:open}). Second,
the minimal augmentation recipe could understate AdamW's true
performance and, in principle, reorder the optimisers under a stronger
recipe. The matched-augmentation audit at the same
$100$-epoch / batch-$128$ budget
(\cref{app:recipe-audit}) gives AdamW L10 $= 66.27$
(lr $= 5\times10^{-4}$), Muon $79.07$, and PolarAdamW $80.67$
(both lr $= 5\times10^{-3}$) on seed $42$: the matrix-optimiser
advantage is preserved under the stronger recipe.
The matched-augmentation comparison is single-seed in all three arms
and uses the learning rate selected in the minimal-recipe sweep, so we
treat it as an audit rather than a separately multi-seed-validated
result; the per-seed AdamW std on the minimal-recipe four-seed table
($0.7$ pp) is much smaller than the matrix-optimiser gap ($>12$ pp),
so the ordering is robust to plausible AdamW seed variance.

\paragraph{Post-hoc trajectory interpretation.}
The 300-epoch audit (\cref{app:recipe-audit-300ep}, trajectories
in \cref{fig:audit-300ep-trajectory}) shows an asymmetric
trajectory: Muon plateaus earlier, whereas PolarAdamW improves late
in the cosine schedule. We hypothesise that PolarAdamW composes AdamW-style second-moment
normalisation with the polar step. Early in training, changing second-moment estimates can
rotate the matrix fed to the polar map; late in training, once these
estimates stabilise and the learning rate is small, the
AdamW-preconditioned direction may become a more reliable local
rescaling. This is consistent with the late-cosine improvement
observed in our audit, but it is post hoc and is not a prediction of
\cref{thm:basis-equiv}.

\begin{figure}[!htbp]
\centering
\includegraphics[width=0.8\linewidth]{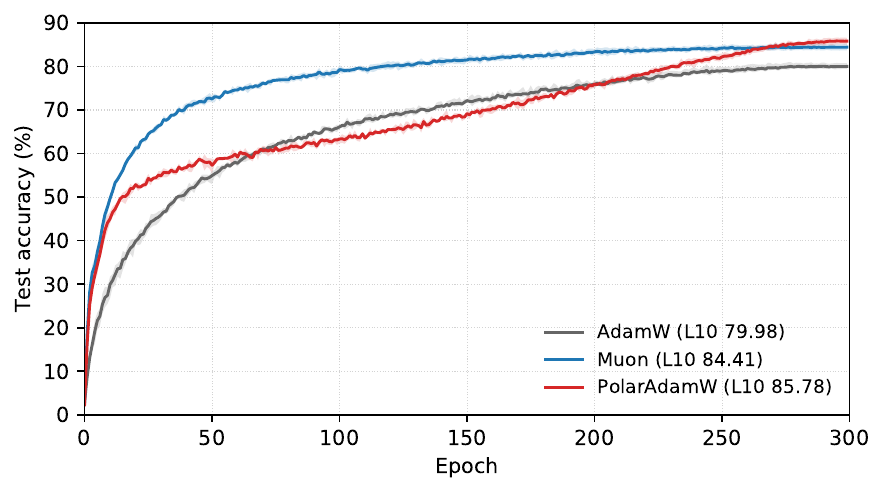}
\caption{Mean test-accuracy trajectories across the four seeds
$\{42, 7, 123, 2024\}$ of the $300$-epoch DeiT-style audit
(\cref{app:recipe-audit-300ep}); shaded bands span min/max across
seeds (narrow throughout, consistent with the per-seed table).
PolarAdamW overtakes Muon during the late cosine schedule, consistent
with the post-hoc regime-asymmetry interpretation above; both
matrix-polar arms remain well above AdamW throughout. L10
(ep $290$--$299$) values inset.}
\label{fig:audit-300ep-trajectory}
\end{figure}

\section{Schur multiplicity setting}
\label{sec:schur-setting}

The Schur isotypic decomposition exposes the natural parameter space
for equivariant networks; we use it to formalise the basis-equivariance
theorem of \cref{sec:gauge-equivariance} and to interpret the
SO(3) experiments of \cref{sec:experiments-equivariant}.

\subsection{General multiplicity decomposition}
\label{sec:setting-general}

Let $G$ be a compact group acting orthogonally on finite-dimensional inner-product
spaces $V$ and $V'$. Let $\{V_\lambda\}_\lambda$ index the irreducible
$G$-representations, and let $\rho_\lambda : G \to O(V_\lambda)$ denote the
action. Both spaces decompose as
\begin{equation}
V \cong \bigoplus_\lambda M_\lambda \otimes V_\lambda,
\qquad
V' \cong \bigoplus_\lambda N_\lambda \otimes V_\lambda,
\label{eq:isotypic}
\end{equation}
where $M_\lambda := \R^{m_\lambda}$ and $N_\lambda := \R^{n_\lambda}$
are the \emph{multiplicity spaces} of $\lambda$ in $V$ and $V'$,
respectively. We restrict throughout to real
irreducible types satisfying $\mathrm{End}_G(V_\lambda) = \R$, which
covers the SO(3) irreps used in our experiments. Every $V$ vector
splits as a sum of pieces labelled by an irrep $\lambda$; the
multiplicity $m_\lambda$ counts how many copies of $V_\lambda$
appear, and the irrep part $V_\lambda$ is where the group action
lives. $G$ acts trivially on $M_\lambda$, so any change of basis on
$M_\lambda$ leaves the $G$-action invariant; restricting to
orthogonal basis changes additionally preserves the inner product on
$M_\lambda$, fixing the Frobenius norm relevant to the polar map
below. We refer to this residual orthogonal multiplicity-basis freedom as
the \emph{Schur gauge}; it is central in
\cref{sec:gauge-equivariance}.

\begin{lemma}[Schur reduction of equivariant linear maps]
\label{lem:schur}
With the isotypic decompositions of \cref{eq:isotypic},
\begin{equation}
\Hom_G(V, V') \;\cong\; \bigoplus_\lambda \Hom(M_\lambda, N_\lambda) \otimes I_{V_\lambda}.
\label{eq:schur-reduction}
\end{equation}
Equivalently, an equivariant linear map $W : V \to V'$ is determined by a
tuple of \emph{multiplicity matrices} $B_\lambda \in \R^{n_\lambda \times m_\lambda}$,
acting on the $\lambda$-isotypic block as $B_\lambda \otimes I_{V_\lambda}$.
\end{lemma}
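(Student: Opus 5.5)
The plan is to reduce \cref{eq:schur-reduction} to Schur's lemma one isotypic block at a time. Fix the decompositions of \cref{eq:isotypic} and write $V = \bigoplus_\lambda V^{(\lambda)}$, $V' = \bigoplus_\mu V'^{(\mu)}$ with $V^{(\lambda)} = M_\lambda \otimes V_\lambda$ and $V'^{(\mu)} = N_\mu \otimes V_\mu$. Any linear map $W : V \to V'$ splits into blocks $W_{\mu\lambda} : V^{(\lambda)} \to V'^{(\mu)}$. Because $G$ acts blockwise on both sides, $W$ commutes with the $G$-action if and only if every block does. This gives
\[
\Hom_G(V, V') \cong \bigoplus_{\lambda,\mu} \Hom_G\bigl(M_\lambda \otimes V_\lambda,\, N_\mu \otimes V_\mu\bigr).
\]

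Next, I split each block further using the standard bases $e_i$ of $M_\lambda = \R^{m_\lambda}$ and $f_j$ of $N_\mu = \R^{n_\mu}$. The space $M_\lambda \otimes V_\lambda$ is the direct sum of the copies $e_i \otimes V_\lambda$, and $G$ acts trivially on the multiplicity factor, so each copy is a $G$-submodule isomorphic to $V_\lambda$. The same holds for $N_\mu \otimes V_\mu$. A block $W_{\mu\lambda}$ is therefore an $n_\mu \times m_\lambda$ array of maps $w_{ji} : V_\lambda \to V_\mu$, and $W_{\mu\lambda}$ is equivariant exactly when every $w_{ji}$ is $G$-equivariant.

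Schur's lemma now does the work. For $\lambda \neq \mu$, a nonzero equivariant map between non-isomorphic irreducibles would have kernel and image that are proper invariant subspaces, contradicting irreducibility, so $w_{ji} = 0$ and all off-diagonal blocks vanish. For $\lambda = \mu$, each $w_{ji}$ lies in $\mathrm{End}_G(V_\lambda)$, which equals $\R$ by the standing assumption. Hence $w_{ji} = b_{ji} I_{V_\lambda}$ for a unique scalar $b_{ji}$.

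Collecting the scalars $b_{ji}$ into $B_\lambda \in \R^{n_\lambda \times m_\lambda}$ gives $W_{\lambda\lambda} = B_\lambda \otimes I_{V_\lambda}$. The assignment $(B_\lambda)_\lambda \mapsto \bigoplus_\lambda B_\lambda \otimes I_{V_\lambda}$ is linear. It is injective, since $B_\lambda$ can be read off any single entry block. It is surjective onto $\Hom_G(V,V')$ by the argument above, and every such map is clearly equivariant because $G$ acts only on the $V_\lambda$ factor. This proves \cref{eq:schur-reduction} and the tuple description.

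The main obstacle is making precise the identification of the isotypic block with the $G$-module $M_\lambda \otimes V_\lambda$ carrying trivial action on $M_\lambda$. In particular, the decomposition in \cref{eq:isotypic} is only an isomorphism, so I must fix the intertwiner and check that the resulting $B_\lambda$ depends on it only through a change of multiplicity basis. That dependence is exactly the Schur gauge used later in \cref{sec:gauge-equivariance}. Compactness of $G$ enters only through the existence of this complete reducibility.
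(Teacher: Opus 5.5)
Your proof is correct. It is the standard argument: reduce blockwise to maps $w_{ji}\colon V_\lambda\to V_\mu$ between single irreducible copies, kill the $\lambda\neq\mu$ blocks by Schur's lemma, and use the standing hypothesis $\mathrm{End}_G(V_\lambda)=\R$ to make the diagonal ones scalar.

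The paper gives no proof of \cref{lem:schur}; \cref{app:proofs} covers only \cref{lem:block-polar}, \cref{thm:basis-equiv} and \cref{prop:adamw-not-equiv}. It simply appeals to Schur's lemma, as it does for the concrete $W_{00}$ and $W_{11}$ blocks in \cref{app:so3-arch}. Your write-up therefore fills in exactly what the paper takes for granted.

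Two small remarks. First, in the $\lambda\neq\mu$ step, the precise phrasing is that a nonzero equivariant $w$ has $\ker w=0$ and $\operatorname{im} w=V_\mu$ by irreducibility, so it is an isomorphism, which is a contradiction. Saying that kernel and image are both proper is not quite right, since a proper subspace such as $0$ contradicts nothing. Second, the ``main obstacle'' you flag is not needed for the lemma as stated. \cref{eq:isotypic} is given as a fixed $G$-isomorphism, so transporting $\Hom_G$ through it is automatic.

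Your gauge remark is true but is a separate fact. Your own argument gives $\mathrm{Aut}_G(M_\lambda\otimes V_\lambda)=\mathrm{GL}(M_\lambda)\otimes I_{V_\lambda}$. Changing the intertwiners by $A_M\otimes I_{V_\lambda}$ and $A_N\otimes I_{V_\lambda}$ then replaces $B_\lambda$ by $A_N^{-1}B_\lambda A_M$, with $A_M, A_N$ orthogonal when the intertwiners are isometries.
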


\noindent After choosing isotypic bases, the trainable degrees of
freedom in an equivariant linear layer are precisely the
multiplicity-space matrices
$B_\lambda : M_\lambda \to N_\lambda$. The SO(3) testbed below gives
a concrete instance of this reduction.

\subsection{The SO(3) point cloud setting}
\label{sec:setting-so3}

Our equivariant testbed is an SO(3)-equivariant network for
3D~point-cloud regression. Each point carries scalar (type-0) and
vector (type-1) channels, each with $h_c$ copies (the hidden-channel
count, $h_c = m_0 = m_1$ in the multiplicity notation above), and each
layer contains three Schur-structured trainable matrices: a
scalar-channel mixer, a vector-channel mixer (which acts as
$B \otimes I_3$ on the vector block), and a Clebsch--Gordan (CG) merge that takes
the $V_0$ component of $V_1 \otimes V_1$ (vector-channel dot products)
back into the scalar block; the $V_1$ and $V_2$ components of
$V_1 \otimes V_1 = V_0 \oplus V_1 \oplus V_2$ are not used. By the
Schur reduction \cref{eq:schur-reduction}, the linear mixers $W_{00},
W_{11}$ in each layer are stored directly as multiplicity matrices
($m_0 \times m_0$ and $m_1 \times m_1$). The CG merge layer
(\texttt{cg\_proj}) acts on scalar features together with
rotation-invariant dot products of vector channels; its weights are
equivariant by construction, but are not the same Schur multiplicity
block as $W_{11}$. The layer also contains auxiliary two-dimensional
message-MLP weights that are equivariant by construction but are not
the Schur blocks used in the basis-equivariance argument. The
matrix-structured optimisers act on the eligible two-dimensional
trainable arrays selected by the parameter split, including these
auxiliary weights. We refer to \cref{app:so3-arch} for the
full architectural walkthrough (parameter shapes, $B \otimes I_3$
implementation via \texttt{einsum}, and the $9 m_1^2 \to m_1^2$
reduction in parameter count under SO(3)-equivariance).

This contrasts with the DeiT setting of
\cref{sec:experiments-transformer}, where the relevant
transformer weights are not symmetry-imposed Schur multiplicity blocks.
The SO(3) model therefore supplies the setting in which
multiplicity-basis freedom is architecturally meaningful.

\medskip
\noindent\textbf{Notation.} For $M \in \R^{n \times m}$ with singular
value decomposition (SVD) $M = U \Sigma V^\top$, the polar factor is $\polar(M) := U V^\top$. We
denote by $\mathrm{NS}_k(M)$ the order-$k$ Newton--Schulz iteration used in
Muon, an odd polynomial in $M / \|M\|_\Frob$ that approximates
$\polar(M)$.

\section{Polar maps and Schur gauge-equivariance}
\label{sec:gauge-equivariance}

\label{sec:block-locality}

\cref{thm:basis-equiv} is the structural axis of the paper: it
identifies orthogonal multiplicity-basis covariance as a property
present in Muon's polar step and absent from AdamW's coordinatewise
step. \cref{lem:block-polar} sets up the block decomposition;
\cref{prop:adamw-not-equiv} establishes the negative case.
Detailed proofs are given in \cref{app:proofs}; below we
state each result and sketch the structural reason it holds.

\begin{lemma}[Polar respects direct sums]
\label{lem:block-polar}
Let $G = \bigoplus_\lambda G_\lambda \otimes I_{V_\lambda}$ be an
equivariant gradient (or momentum) tensor, with $G_\lambda \in \R^{n_\lambda \times m_\lambda}$.
Define the block-polar operator
\[
\polar_{\mathrm{block}}(G) \;:=\; \bigoplus_\lambda \polar(G_\lambda) \otimes I_{V_\lambda}.
\]
Then $\polar_{\mathrm{block}}(G)$ is again an equivariant tensor and,
under the same canonical polar convention on rank-deficient blocks,
coincides with the ordinary polar $\polar(G)$ taken on the full matrix
$G$ written in any basis adapted to the isotypic decomposition. The
same identity holds with $\mathrm{NS}_k$ in place of $\polar$ in exact
arithmetic with a common normalisation.
\end{lemma}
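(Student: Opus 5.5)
The proof treats the full matrix $G$ as a direct sum of Kronecker blocks and follows the SVD through this structure. First, equivariance of $\polar_{\mathrm{block}}(G)$ is immediate from \cref{lem:schur}. Each summand $\polar(G_\lambda)\otimes I_{V_\lambda}$ lies in $\Hom(M_\lambda,N_\lambda)\otimes I_{V_\lambda}$, so the direct sum lies in $\Hom_G(V,V')$ under the isomorphism \cref{eq:schur-reduction}. The substance of the lemma is the identity $\polar(G)=\polar_{\mathrm{block}}(G)$ in an isotypic-adapted basis. I will derive it from a compatible SVD. Write $d_\lambda=\dim V_\lambda$ and fix thin SVDs $G_\lambda=U_\lambda\Sigma_\lambda V_\lambda^{\top}$, writing $\tilde V_\lambda$ for the right factor to avoid a clash with the irrep space. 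Then
\[
G_\lambda\otimes I_{d_\lambda}=(U_\lambda\otimes I_{d_\lambda})(\Sigma_\lambda\otimes I_{d_\lambda})(\tilde V_\lambda\otimes I_{d_\lambda})^{\top}.
\]
Kronecker products of matrices with orthonormal columns again have orthonormal columns, and $\Sigma_\lambda\otimes I_{d_\lambda}$ is diagonal and nonnegative. So this is a thin SVD of the block. Summing over $\lambda$ block-diagonally gives a thin SVD of $G$, up to a permutation that sorts the singular values, which does not affect $UV^\top$. Hence $\polar(G)=\bigoplus_\lambda (U_\lambda\tilde V_\lambda^{\top})\otimes I_{d_\lambda}$, which is the claim.

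The main obstacle is well-definedness of the polar factor. When $G$ is rank-deficient, or when singular values coincide across or within blocks, the SVD is not unique. I will handle this by fixing the canonical convention $\polar(M):=UV^\top$ computed from a thin SVD restricted to the nonzero singular values, equivalently $\polar(M)=M(M^\top M)^{+1/2}$ with the Moore--Penrose pseudo-inverse. This quantity is independent of the chosen SVD, because it is a spectral function of $M^\top M$. With this formula the identity can also be checked directly, without tracking SVD choices. Note that $G^\top G=\bigoplus_\lambda (G_\lambda^\top G_\lambda)\otimes I_{d_\lambda}$ is block diagonal, and pseudo-inverse square roots respect both direct sums and $\otimes I$. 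So $(G^\top G)^{+1/2}=\bigoplus_\lambda (G_\lambda^\top G_\lambda)^{+1/2}\otimes I_{d_\lambda}$, and multiplying by $G$ gives the block formula. I would present this functional-calculus route as the main proof and the SVD computation as intuition. Independence of the adapted basis follows because any two adapted bases differ by orthogonal maps commuting with the decomposition, and the canonical polar map is orthogonally equivariant: $\polar(QMR^\top)=Q\,\polar(M)R^\top$.

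For $\mathrm{NS}_k$, each Newton--Schulz step has the form $X\mapsto aX+bXX^\top X+c(XX^\top)^2X$, a polynomial in $X$ and $X^\top$ built from products. Such maps preserve the block-diagonal Kronecker structure: $(A\otimes I)(B\otimes I)^\top=(AB^\top)\otimes I$, and the operations act blockwise. Induction on the iteration count then shows $\mathrm{NS}_k(G)=\bigoplus_\lambda \mathrm{NS}_k^{(c)}(G_\lambda)\otimes I_{d_\lambda}$. Here the common normalisation $c=\|G\|_\Frob$ is used for every block, since the initial scaling $X_0=G/\|G\|_\Frob$ must be the same scalar for all blocks for the identity to hold. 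With per-block normalisation the identity would fail, which is why the statement specifies a common one.
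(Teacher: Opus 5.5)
Your proposal is correct. Your SVD sketch is essentially the paper's proof, but your main argument takes a different route. The paper takes full SVDs of each block ($U_\lambda\in O(n_\lambda)$, $V_\lambda\in O(m_\lambda)$), tensors them with $I_{V_\lambda}$, and assembles a block-diagonal SVD of $G$. It then reads off $\polar(G)$ under a ``canonical convention'' it never specifies. You instead fix the convention as the partial isometry $M(M^\top M)^{+1/2}$ and obtain the identity by functional calculus on $G^\top G=\bigoplus_\lambda (G_\lambda^\top G_\lambda)\otimes I_{d_\lambda}$.

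What your route buys is that SVD non-uniqueness never enters. It also sidesteps two bookkeeping points that the paper's assembly glosses over for non-square blocks. First, a direct sum of rectangular-diagonal factors such as $[\sigma_1\ 0]\oplus[\sigma_2\ 0]$ is diagonal only after a column permutation. Second, $(U_\lambda\otimes I)(V_\lambda\otimes I)^\top$ with full orthogonal factors is only defined after truncating to the leading columns. Your route also identifies the convention under which the lemma is actually true. A $1\times 2$ block plus a $2\times 1$ block gives a singular $3\times 3$ matrix $G$ even though neither block is rank-deficient. An ``orthogonal polar factor'' convention would therefore break the identity, whereas the partial isometry does not; it is also what Newton--Schulz approximates, since the odd polynomial fixes zero singular values.

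The paper's route, in exchange, is more concrete: it exhibits an explicit SVD of $G$ in which each singular value of $G_\lambda$ appears $\dim V_\lambda$ times. For Newton--Schulz the two arguments coincide, both resting on $X\mapsto X\,p(X^\top X)$ and $(A\otimes I)(B\otimes I)=AB\otimes I$. You additionally make explicit that the shared scale is $\|G\|_\Frob$, with $\|G\|_\Frob^2=\sum_\lambda \dim V_\lambda\,\|G_\lambda\|_\Frob^2$. The paper's formula $\mathrm{NS}_k(G)=\bigoplus_\lambda \mathrm{NS}_k(G_\lambda)\otimes I_{V_\lambda}$ leaves this implicit.
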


\paragraph{Intuition.}
The polar map is defined from the SVD and respects orthogonal block
decompositions, rather than the particular slicing of the matrix into
coordinates. When the matrix is
block-diagonal across irreducible representation types $\lambda$,
with each block of the form $G_\lambda \otimes I_{V_\lambda}$, the
$\otimes I_{V_\lambda}$ factor is invisible to the SVD up to
multiplicity, so polar applied to the whole equals polar applied to
each multiplicity-space matrix $G_\lambda$ separately. \emph{For the
exact polar map, applying the polar component to the full equivariant
gradient or momentum tensor produces the same block-structured update
as applying it separately to each multiplicity block $G_\lambda$.}
The same structural identity holds for finite-step Newton--Schulz
under shared scalar normalisation; the bf16 implementation is treated
empirically in \cref{app:rotation-test}.

\begin{remark}
For the exact polar map, this lemma shows that applying the polar map
to the full equivariant gradient or momentum tensor agrees with
applying it block-wise to each multiplicity-space block $G_\lambda$;
for finite-step Newton--Schulz the same holds under shared scalar
normalisation, and the bf16 deviation is quantified in
\cref{app:rotation-test}.
\end{remark}

The choice of basis on each multiplicity space $M_\lambda$ is a gauge
freedom: physically equivalent parameterisations of the equivariant network
differ by orthogonal change of basis on input and output multiplicity
spaces. The natural question is whether the optimiser's update direction
respects that gauge.

\begin{theorem}[Muon is multiplicity-basis equivariant]
\label{thm:basis-equiv}
For any $G_\lambda \in \R^{n_\lambda \times m_\lambda}$ and any orthogonal
$P_\lambda \in O(n_\lambda)$, $Q_\lambda \in O(m_\lambda)$,
\begin{equation}
\polar(P_\lambda\, G_\lambda\, Q_\lambda^\top)
\;=\;
P_\lambda \cdot \polar(G_\lambda) \cdot Q_\lambda^\top,
\label{eq:basis-equiv-polar}
\end{equation}
and the same identity holds with $\mathrm{NS}_k$ in place of $\polar$.
\end{theorem}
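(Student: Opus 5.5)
The plan is to prove \cref{eq:basis-equiv-polar} directly from the SVD, and then obtain the $\mathrm{NS}_k$ statement from the fact that $\mathrm{NS}_k$ is built from an odd matrix polynomial. Write $P := P_\lambda$, $Q := Q_\lambda$ and $G := G_\lambda$, and take an SVD $G = U\Sigma V^\top$. Then
\[
PGQ^\top = (PU)\,\Sigma\,(QV)^\top .
\]
Since $P$ and $Q$ are orthogonal, $PU$ and $QV$ again have orthonormal columns, so this is a valid SVD of $PGQ^\top$ with the same singular values. Reading off the polar factor gives
\[
\polar(PGQ^\top) = (PU)(QV)^\top = P\,UV^\top Q^\top = P\,\polar(G)\,Q^\top .
\]

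The one point needing care is that the polar factor must not depend on which SVD we chose. I would handle this in two cases.
\begin{itemize}
\item For full-rank $G$, $\polar(G)$ is intrinsic: for instance, when $n \ge m$ it equals $G(G^\top G)^{-1/2}$. Substituting $PGQ^\top$ gives $(PGQ^\top)^\top(PGQ^\top) = Q\,G^\top G\,Q^\top$. The inverse square root commutes with orthogonal conjugation, so $\polar(PGQ^\top) = PGQ^\top \cdot Q(G^\top G)^{-1/2}Q^\top = P\,\polar(G)\,Q^\top$.
\item For rank-deficient $G$, I would adopt the same canonical convention referenced in \cref{lem:block-polar}: $\polar(G) = U_r V_r^\top$, the sum over nonzero singular values only. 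This factor is uniquely determined, since it equals $G\,(G^\top G)^{+1/2}$ with the Moore--Penrose square-root pseudoinverse. The identical conjugation computation then applies, using $(QAQ^\top)^{+} = QA^{+}Q^\top$.
\end{itemize}
I expect this well-definedness issue to be the main (mild) obstacle. The functional-calculus formulation sidesteps it cleanly, so I would present the proof via $G\,f(G^\top G)$ rather than via the SVD directly.

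For $\mathrm{NS}_k$, use the structure stated in the Notation paragraph. First normalise: $X_0 = G/\|G\|_\Frob$. Each step then has the form
\[
X_{j+1} = aX_j + b\,X_jX_j^\top X_j + c\,(X_jX_j^\top)^2 X_j ,
\]
so it is an odd polynomial of the form $X\,p(X^\top X)$. Two facts drive the argument. The Frobenius norm is orthogonally invariant, so $PGQ^\top$ normalises to $PX_0Q^\top$. And if $X' = PXQ^\top$, then $X'^\top X' = Q\,X^\top X\,Q^\top$ and $X'X'^\top = P\,XX^\top\,P^\top$, so every monomial $(XX^\top)^i X$ maps to $P\,(XX^\top)^i X\,Q^\top$. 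By induction on $j$, the iterate started from $PGQ^\top$ equals $P X_j Q^\top$ for every $j$, which gives $\mathrm{NS}_k(PGQ^\top) = P\,\mathrm{NS}_k(G)\,Q^\top$ in exact arithmetic. Exactness is the same caveat as in \cref{lem:block-polar}; bf16 rounding is not covered. The $G = 0$ case is handled by convention, since both sides are $0$.

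Finally, I would note how this combines with \cref{lem:block-polar} to give the gauge statement for the full equivariant tensor. Changing the basis blockwise by $\bigoplus_\lambda (P_\lambda, Q_\lambda)$ commutes with the block polar map, because the theorem applies to each $G_\lambda$ separately and the tensor factor $I_{V_\lambda}$ is untouched.
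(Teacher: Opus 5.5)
Your proof is correct, and its skeleton matches the paper's. For the polar identity you transform the SVD to $(PU)\,\Sigma\,(QV)^\top$. For $\mathrm{NS}_k$ you use that one step $X \mapsto X\,p(X^\top X)$ is covariant under $X \mapsto PXQ^\top$.

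Where you differ is in how the polar factor is pinned down. The paper reads $\polar$ off one particular full SVD, with $U \in O(n)$, $V \in O(m)$, so $(PU)(QV)^\top$ is literally only meaningful for square $G$. It then defers rank-deficient blocks to an unspecified ``canonical convention.'' Your formula $\polar(G) = G\,(G^\top G)^{+1/2}$ fixes that convention as the partial isometry $U_rV_r^\top$ and makes it independent of the choice of SVD. It also treats non-square and rank-deficient $G$ uniformly; your separate full-rank case is in fact subsumed by it. Since an odd polynomial sends zero singular values to zero, this is also the convention $\mathrm{NS}_k$ actually approximates.

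The functional-calculus framing makes both halves of the theorem instances of one fact. Any map of the form $X \mapsto X\,f(X^\top X)$, with $f$ acting through the spectral decomposition of $X^\top X$, is bi-orthogonally equivariant. The paper's SVD route is shorter but leaves this well-definedness step implicit.

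You also handle the normalisation $G/\|G\|_\Frob$ explicitly, via orthogonal invariance of the Frobenius norm. The paper's $\mathrm{NS}_k$ argument omits this step, even though its Notation paragraph builds the normalisation into $\mathrm{NS}_k$.
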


\paragraph{Intuition.}
The polar map sends $G$ to its SVD polar factor $U V^\top$, the
closest orthogonal factor in Frobenius norm. This depends on the
singular subspaces of $G$, not on the coordinate chart used to
express its entries. If
we rotate the input basis (apply $Q$) and the output basis (apply
$P$), the polar update transforms by the same rotations: the new
SVD is $(P U)\,\Sigma\,(Q V)^\top$, and the polar factor is
$(P U)(Q V)^\top = P\,(U V^\top)\,Q^\top$. The Newton--Schulz
iteration inherits this property because each step is built from
$X$ and $X^\top X$, both of which transform covariantly under
orthogonal conjugation.

\begin{proposition}[AdamW is Schur gauge-dependent]
\label{prop:adamw-not-equiv}
Let $\varepsilon > 0$, and let
\[
\bigl(\rho_\varepsilon(M)\bigr)_{ij}
\;:=\; \frac{M_{ij}}{|M_{ij}| + \varepsilon}
\]
denote the element-wise sign-like AdamW preconditioned direction in
the limiting case $\sqrt{\hat v_{ij}} = |M_{ij}|$. For orthogonal
changes of basis $P \in O(n)$, $Q \in O(m)$, the identity
\[
\rho_\varepsilon(P M Q^\top) \;=\; P\,\rho_\varepsilon(M)\,Q^\top
\]
does not hold in general.
\end{proposition}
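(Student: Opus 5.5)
Since the claim is that the identity fails \emph{in general}, a single explicit counterexample suffices, and I will take the smallest one. Set $n = m = 2$, $Q = I$, and let $P$ be the rotation by $\pi/4$,
\[
P = \tfrac{1}{\sqrt 2}\begin{pmatrix} 1 & -1 \\ 1 & 1 \end{pmatrix},
\]
with $M = e_1 e_1^\top = \begin{pmatrix} 1 & 0 \\ 0 & 0 \end{pmatrix}$. This $M$ is rank one, which keeps every entry computable by hand, and the rotation spreads its single nonzero entry across several coordinates, so the coordinatewise map $\rho_\varepsilon$ will see a different entry pattern.

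Next I compute both sides entrywise. On the right, $\rho_\varepsilon(M) = \tfrac{1}{1+\varepsilon} e_1 e_1^\top$, so
\[
P\,\rho_\varepsilon(M) = \tfrac{1}{(1+\varepsilon)\sqrt 2}\begin{pmatrix} 1 & 0 \\ 1 & 0 \end{pmatrix}.
\]
On the left, $PM = \tfrac{1}{\sqrt 2}\begin{pmatrix} 1 & 0 \\ 1 & 0 \end{pmatrix}$, whose nonzero entries are $1/\sqrt 2$. Hence
\[
\rho_\varepsilon(PM) = \tfrac{1}{1+\varepsilon\sqrt 2}\begin{pmatrix} 1 & 0 \\ 1 & 0 \end{pmatrix},
\]
using $\frac{1/\sqrt 2}{1/\sqrt 2 + \varepsilon} = \frac{1}{1+\varepsilon\sqrt 2}$.

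Comparing the $(1,1)$ entries, equality would require $(1+\varepsilon)\sqrt 2 = 1 + \varepsilon\sqrt 2$. That simplifies to $\sqrt 2 - 1 = \varepsilon(\sqrt 2 - 1)$, i.e.\ $\varepsilon = 1$. So the identity fails for every $\varepsilon > 0$ with $\varepsilon \neq 1$.

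The main obstacle is this exceptional value $\varepsilon = 1$, since the proposition allows any $\varepsilon > 0$. There are two ways to handle it. One is to rescale the example, taking $M = c\,e_1 e_1^\top$ with $c > 0$. The same computation then requires
\[
\tfrac{1}{\sqrt 2}\cdot\tfrac{c}{c+\varepsilon} = \tfrac{c/\sqrt 2}{c/\sqrt 2+\varepsilon},
\]
which reduces to $\varepsilon(\sqrt 2 - 1) = c(\sqrt 2 - 1)$, i.e.\ $c = \varepsilon$. Choosing any $c \neq \varepsilon$ gives a counterexample for every fixed $\varepsilon > 0$. The other is a scale-free obstruction: $P\rho_\varepsilon(M)Q^\top$ can have entries of modulus up to $\|\rho_\varepsilon(M)\|_{\Frob}$, whereas every entry of $\rho_\varepsilon(\cdot)$ has modulus below $1$. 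I expect to use the rescaling argument because it is fully explicit.

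Finally, I will note that the same computation applies to the genuine AdamW direction $D_t^{\mathrm{AdamW}}$ whenever $\hat v_t$ is formed entrywise from a single gradient. This ties the counterexample back to the discussion after \cref{eq:polaradamw-update} and contrasts with \cref{thm:basis-equiv}.
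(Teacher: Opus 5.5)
Your construction is essentially the paper's: $n=m=2$, $Q=I$, $P=R(\pi/4)$, with the rank-one $M=e_1e_1^\top$ in place of the paper's $M=I_2$. The paper's choice makes both sides scalar multiples of $P$. Your entrywise formulas for $P\rho_\varepsilon(M)$ and $\rho_\varepsilon(PM)$ are correct. The $(1,1)$ comparison leads to exactly the scalar equation the paper reaches, $\sqrt2\,(1+\varepsilon)=1+\sqrt2\,\varepsilon$.

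You then simplify it wrongly. Expanding gives $\sqrt2+\sqrt2\,\varepsilon=1+\sqrt2\,\varepsilon$, i.e.\ $\sqrt2=1$, which is impossible. Your $\sqrt2-1=\varepsilon(\sqrt2-1)$ comes from expanding $(1+\varepsilon)\sqrt2$ as $\sqrt2+\varepsilon$. As a sanity check at $\varepsilon=1$, the two $(1,1)$ entries are $1/(2\sqrt2)\approx 0.354$ and $1/(1+\sqrt2)\approx 0.414$.

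So there is no exceptional value. Your original example already refutes the identity for every $\varepsilon>0$, and the ``main obstacle'' paragraph should be removed. The rescaled computation has the same slip: $\frac{c}{\sqrt2\,(c+\varepsilon)}=\frac{c}{c+\sqrt2\,\varepsilon}$ reduces to $(\sqrt2-1)\,c=0$, not to $c=\varepsilon$, so every $c>0$ works. Your conclusion that any $c\neq\varepsilon$ gives a counterexample is therefore true, but the derivation you planned to write for it is false. The fix is to correct the algebra, not to rescale.

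Your scale-free alternative is also imprecise as stated. The largest entry of $PXQ^\top$ over orthogonal $P,Q$ is the spectral norm $\|X\|_2$, not $\|X\|_{\Frob}$. The obstruction therefore needs $\|\rho_\varepsilon(M)\|_2\ge 1$. Your single-entry $M=c\,e_1e_1^\top$ never gives this, since $\|\rho_\varepsilon(c\,e_1e_1^\top)\|_2=c/(c+\varepsilon)<1$. The choice $M=c\,\mathbf{1}\mathbf{1}^\top$ with $c>\varepsilon$ would work, with $P=Q=R(-\pi/4)$.
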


\paragraph{Intuition.}
$\rho_\varepsilon$ depends on the magnitudes of the entries of $M$,
not on the linear map that $M$ represents. A change of basis that
mixes coordinate directions redistributes magnitudes across entries;
a single nonzero entry can become four equal-magnitude entries with
different signs, and the elementwise rule then produces an output
that no orthogonal post-correction can recover.
\cref{app:proofs-prop} gives an explicit two-dimensional
instance for any $\varepsilon > 0$.

\begin{remark}
\emph{Muon is intrinsic to the multiplicity-space linear map
$G_\lambda : M_\lambda \to N_\lambda$ as an operator. AdamW is
intrinsic only to the coordinate chart in which $G_\lambda$'s entries
are stored.} On Schur multiplicity matrices, where the basis is
gauge, Muon respects the gauge and AdamW does not.
\cref{prop:adamw-not-equiv} isolates the adaptive
coordinatewise-preconditioning step of AdamW; the decoupled
weight-decay term is scalar in $G$ and is not the source of the
gauge dependence.
\end{remark}

\begin{remark}
\cref{lem:block-polar} also holds for ordinary SGD and
scalar-momentum SGD (linear maps respect direct sums trivially) and
for AdamW (coordinatewise maps respect direct sums of
coordinate-aligned blocks). \cref{thm:basis-equiv}'s
covariance identity is also satisfied by SGD and scalar-momentum SGD,
since these are linear in $G$. The distinguishing combination for
Muon is therefore basis-equivariance \emph{plus} a per-block
spectral-norm control on the update: SGD/SGD-momentum lack the spectral
control; AdamW lacks both.
\end{remark}

\paragraph{Caveat.}
\cref{thm:basis-equiv} is a one-step covariance statement about
the update direction; convergence-rate consequences and quantitative
loss-of-progress bounds for AdamW under Schur gauge-dependence remain
open (see \cref{sec:open}). The SO(3) Muon--PolarAdamW comparison is
consistent with two non-exclusive readings: (a) architecture- and
scale-specific optimisation effects beyond Schur gauge-equivariance,
or (b) those effects plus a Schur gauge-equivariance bonus predicted
by \cref{thm:basis-equiv}; PolarAdamW provides a partial
separation by preserving polar spectral control while breaking Schur
gauge-equivariance. A clean separation requires an ablation that
breaks Schur gauge-equivariance while keeping Muon's exact momentum
dynamics; this remains open.

\section{Equivariant experiments: a regime reversal}
\label{sec:polar-adamw-double}
\label{sec:experiments-equivariant}

\cref{sec:gauge-equivariance} isolates a structural difference
between Muon and AdamW-preconditioned polar updates: Muon is Schur
gauge-equivariant, whereas PolarAdamW is not. If this difference is
an active ingredient in the SO(3) regime, we should expect a
\emph{double dissociation} when comparing Muon against PolarAdamW.

\paragraph{Numerical verification.} Before measuring this prediction
empirically, we verify \cref{thm:basis-equiv}'s covariance
identity at a representative multiplicity-matrix shape (hidden-channel
count $h_c = 8$); the full table across DeiT and SO(3) shapes is in
\cref{app:rotation-test}. We define
\[
\Delta(\phi; G, P, Q) \;:=\;
\frac{\|\phi(PGQ^\top) - P\phi(G)Q^\top\|_\Frob}{\|\phi(G)\|_\Frob}
\]
to measure how much an update map $\phi$ deviates from covariance
under the conjugation $G \mapsto PGQ^\top$. Over a sample of random
$(G, P, Q)$, both exact polar and full-precision Newton--Schulz give
$\Delta$ at numerical precision ($\sim 10^{-7}$); production bf16
Newton--Schulz (as Muon actually runs) is equivariant up to
finite-precision noise ($\Delta \sim 3$--$5\%$, consistent with bf16
mantissa width); AdamW's element-wise step gives $\Delta = \Theta(1)$,
confirming \cref{prop:adamw-not-equiv}. We refer to
\cref{app:rotation-test} for the full table, methodology, and
code reference. Thus the comparison below is made at shapes where the
structural distinction is numerically visible.

\paragraph{Equivariant architecture.}
We use a controlled SO(3)-equivariant point-cloud regression testbed:
an explicit Schur-block architecture (not a full E(3)-equivariant
network like MACE or NequIP), with targets
$\max \lambda(X^\top X / N)$ that are exactly SO(3)-invariant,
hidden-channel count $h_c \in \{16, 32, 64, 128\}$, and $100$ epochs
with $10$-epoch linear warmup, cosine decay, and early stopping at
patience $30$; architecture in \cref{app:so3-arch} and
training recipe in \cref{app:recipe-data}. The cross-axis 100-seed audit is
reported in \cref{tab:so3-mse}.

\paragraph{Weight-decay convention.}
All SO(3) arms run with weight decay $0$ throughout, by deliberate
design. The three optimisers handle weight decay differently (Muon's
matrix step has no wd parameter, following the reference
implementation; AdamW and PolarAdamW do), so applying $\text{wd}=0.05$
would impose arm-specific regularization rather than identical
regularization, defeating the goal of isolating the optimiser update.
The DeiT experiments in \cref{sec:experiments-transformer} use the
standard $\text{wd}=0.05$ ImageNet recipe to match practitioner-realistic
conditions. As a 4-seed convention audit, we re-ran the SO(3) comparison
under the $\text{wd}=0.05$ recipe: at $h_c = 16$, the Polar--Muon
paired $\Delta$ changes from $+0.00158$ (wd=0) to $+0.00174$
(wd=0.05), with all four seeds positive in both conditions; at
$h_c = 32$, AdamW at $\text{wd}=0.05$ gives mean test MSE
$0.00923$ versus the $\text{wd}=0$ baseline at $0.00918$. The SO(3) ordering and cross-axis
pattern are supported under the wd convention at the audited cells.

On this testbed, the PolarAdamW advantage observed on DeiT-Tiny
reverses, and the Muon edge grows with capacity:

\begin{table}[ht]
\centering
\begin{tabular}{ccccccc}
\toprule
$h_c$ & AdamW & Muon & PolarAdamW & $\Delta$(P$-$M) & paired $t$ & $n$ \\
\midrule
$16$  & $0.0099$ & $\mathbf{0.0067}$ & $0.0074$ & $+0.00070$ & $4.97$ & $100$ \\
$32$  & $0.0102$ & $\mathbf{0.0064}$ & $0.0074$ & $+0.00103$ & $7.63$ & $100$ \\
$64$  & $0.0103$ & $\mathbf{0.0055}$ & $0.0067$ & $+0.00120$ & $6.96$ & $100$ \\
$128$ & $0.0104$ & $\mathbf{0.0049}$ & $0.0063$ & $+0.00139$ & $8.62$ & $100$ \\
\bottomrule
\end{tabular}
\caption{Test MSE on SO(3)-equivariant point-cloud regression across
hidden-channel widths $h_c \in \{16, 32, 64, 128\}$, lower is better.
Entries are mean test MSE over $n$ paired seeds (column $n$). Muon
outperforms PolarAdamW at every width with $p < 10^{-5}$ (paired
$t$-test on common seeds), and the gap grows monotonically with
capacity. Both matrix-polar optimisers outperform AdamW, whose
performance does not improve with capacity. Recipe is fixed across
all cells; this is not a tuned benchmark.}
\label{tab:so3-mse}
\end{table}

\begin{remark}
The Muon edge over PolarAdamW is statistically significant at every
audited capacity ($p < 10^{-5}$, paired $t \in [4.97, 8.62]$) and
grows monotonically with $h_c$ (\cref{tab:so3-mse}). This is
consistent with the structural distinction isolated in
\cref{thm:basis-equiv}: Muon is Schur gauge-equivariant on
multiplicity matrices, whereas PolarAdamW introduces coordinatewise
AdamW preconditioning before the polar step, and the widening gap is
consistent with gauge-equivariance becoming more load-bearing as the
multiplicity-space dimension grows. We treat this as a structural consistency
claim rather than a causal proof; a Schur gauge-violation diagnostic
at training time (applying random orthogonal basis changes within
multiplicity spaces and comparing update norms) would yield a
stronger mechanism statement, and we leave this to future work.
\end{remark}

As a small robustness check, a four-seed learning-rate sweep at
$h_c = 32$ ($\eta \in \{3{\times}10^{-4}, 10^{-3}, 3{\times}10^{-3}\}$)
gives $\Delta(\mathrm{P} - \mathrm{M}) = +0.00111, +0.00084, +0.00014$
across the three rates; the 100-seed capacity audit
(\cref{tab:so3-mse}) is the primary evidence for the SO(3)
optimiser ordering.

This regime reversal is the primary empirical signature consistent
with the spectral-control / gauge-equivariance interpretation.

\section{Discussion and limitations}
\label{sec:open}

\paragraph{Contributions.}
The main contribution is a mechanism separation rather than a claim
that PolarAdamW is a standalone optimiser breakthrough. Analytically,
the paper contributes: (i) a Schur-multiplicity framework for asking
whether optimiser updates respect multiplicity-basis gauge freedom;
(ii) \cref{lem:block-polar}, showing that polar updates respect
the relevant isotypic block structure; (iii)
\cref{thm:basis-equiv}, proving Schur gauge-equivariance of
the polar/Muon step; and (iv) \cref{prop:adamw-not-equiv},
showing that AdamW-style coordinatewise preconditioning is
basis-dependent. Empirically, the SO(3) testbed supplies a regime in
which this distinction is structurally meaningful, while the
DeiT-Tiny experiments supply a complementary regime in which the
gauge is trivial.

PolarAdamW separates two effects that are coupled in Muon's polar step:
matrix-level spectral control and basis-equivariance under orthogonal
changes of multiplicity-space basis. Across architectures, the optimiser
ordering reverses: PolarAdamW gains over Muon on transformers, where
multiplicity-basis freedom is trivial; Muon retains a capacity-growing
advantage on SO(3)-equivariant networks, where this freedom is
non-trivial. This is consistent with the spectral-control versus
gauge-equivariance interpretation.

Both empirical sides have known scope. The transformer experiments use
random 100-class subsets of ImageNet-1k for paired multi-seed
comparison; a full-scale ImageNet-1k or language-modelling study,
beyond the compute budget available for this study, is the natural
next step. The SO(3) testbed similarly serves as a mechanism check
rather than a complete equivariant-learning benchmark; it uses one
tensor-field point-cloud architecture with type-0 and type-1 channels
and Clebsch--Gordan contractions. The matrix-structured optimisers also
act on auxiliary two-dimensional message-MLP weights inside this
architecture (\cref{app:so3-arch}), so the observed Muon edge
is not attributable purely to Schur gauge-equivariance on the
multiplicity blocks. Extending the comparison to standard
E(3)-equivariant atomistic networks such as
NequIP~\citep{Batzner_2022} and
MACE~\citep{batatia2022macehigherorderequivariant}, equivariant
attention, and higher-order tensor-field networks would clarify how
broadly the Muon--PolarAdamW contrast extends, as would testing
group-equivariant and steerable
CNNs~\citep{cohen2016groupequivariantconvolutionalnetworks,weiler2019generale2equivariantsteerablecnns},
whose equivariant kernel structures admit multiplicity-block
decompositions.

\cref{thm:basis-equiv} is a structural covariance statement
about one update direction; it does not give a convergence theorem or
a quantitative loss-of-progress bound for basis-dependent
coordinatewise preconditioning. A natural theoretical step is to
bound the basis-dependence gap of AdamW-style updates as a function
of the distribution of matrix entries under orthogonal basis changes,
connecting the Schur-gauge distinction to per-step optimisation cost.
A separate direction combines polar-normalised updates with explicit
parameter-space constraints, such as row-normalised weight matrices,
which changes the geometry of the parameter space rather than only
the update direction. Both directions lie outside the Schur-gauge
setting studied here. Extending the analysis to non-orthogonal basis
changes, changes of representation type, or transformations that mix
isotypic blocks would require additional structure.

\bibliographystyle{ims}
\bibliography{refs}

\appendix

\section{DeiT-Tiny augmentation-recipe audits}
\label{app:recipe-audit}

This appendix reports the matched-augmentation audits at 100 epochs
(Phase S1) and 300 epochs (Phase S2) on DeiT-Tiny under a DeiT-style
strong-augmentation recipe~\citep{touvron2021trainingdataefficientimagetransformers}.
Phase S1 is single-seed at $s{=}42$; Phase S2 is four-seed
($s \in \{42, 7, 123, 2024\}$) for all three optimisers.

\paragraph{Phase S1 configuration.}
The Phase S1 audit is one specific point in hyperparameter space, run
at the same $100$-epoch / batch-$128$ budget as the primary optimiser
arms in \cref{sec:experiments-transformer}. Exact values:

\begin{center}
\small
\begin{tabular}{ll}
\toprule
optimiser & AdamW (\texttt{torch.optim.AdamW}, $\beta_1=0.9$, $\beta_2=0.999$, $\varepsilon=10^{-8}$) \\
learning rate & $5\times10^{-4}$ (constant ``base''; cosine schedule below) \\
weight decay & $0.05$ (decoupled) \\
batch size & $128$ \\
total epochs & $100$ \\
warmup & $5$ epochs, linear from $\eta_0 = 10^{-3} \cdot \mathrm{lr}$ to $\mathrm{lr}$ \\
LR schedule & linear warmup, then cosine decay to $0$ over remaining $95$ epochs \\
Mixup $\alpha$ & $0.8$ (timm \texttt{Mixup}, mode \texttt{batch}, prob $1.0$, switch-prob $0.5$) \\
CutMix $\alpha$ & $1.0$ (combined with Mixup via switch) \\
RandAugment & \texttt{rand-m9-mstd0.5-inc1} (timm \texttt{create\_transform}) \\
label smoothing & $\varepsilon = 0.1$ (via timm \texttt{SoftTargetCrossEntropy}) \\
DropPath & $0.1$ (\texttt{drop\_path\_rate} on \texttt{vit\_tiny\_patch16\_224}) \\
RandomErasing & $p = 0.25$ (timm transform default) \\
image size & $224 \times 224$, bicubic interpolation \\
test transform & \texttt{Resize(256)} $\to$ \texttt{CenterCrop(224)} \\
seed & $42$ (single-seed audit) \\
\bottomrule
\end{tabular}
\end{center}

These values match the timm~\citep{wightman2019timm} / facebookresearch
DeiT~\citep{touvron2021trainingdataefficientimagetransformers}
\texttt{main.py} defaults for the listed knobs; they differ from the
official DeiT training setup in budget and scale ($100$ vs $300$
epochs; batch $128$ vs $1024$; no repeated augmentation; no EMA; no
colour jitter). The audit therefore measures \emph{recipe contribution
at this specific budget}, not the full official DeiT recipe outcome.

\paragraph{Result.} The same augmentation stack was run on all three
optimisers at $s{=}42$, $100$ ep, b$=128$, with each optimiser at the
learning rate that won its single-seed lr-scan on the minimal recipe:
AdamW lr $=5\times10^{-4}$, Muon lr $=0.005$, PolarAdamW lr $=0.005$.
Resulting L10 (test accuracy mean over ep 90-99):

\begin{center}
\begin{tabular}{lccc}
\toprule
optimiser & minimal L10 & matched-aug L10 & recipe lift \\
\midrule
AdamW       & $63.08$ & $66.27$         & $+3.19$ pp \\
Muon        & $71.13$ & $79.07$         & $+7.94$ pp \\
PolarAdamW  & $73.11$ & $\mathbf{80.67}$ & $+7.56$ pp \\
\bottomrule
\end{tabular}
\end{center}

Two observations follow.

\paragraph{Recipe-effect contrast.} The DeiT-style augmentation stack
benefits matrix-structured polar optimisers much more than plain AdamW
in this setup: AdamW gains $+3.19$ pp from augmentation, while Muon and
PolarAdamW gain $+7.94$ pp and $+7.56$ pp respectively. The
Muon-vs-AdamW gap under matched augmentation is therefore $\sim 12.80$
pp, substantially wider than the minimal-recipe gap of $\sim 8$ pp; the
gap is not a baseline artefact.

\paragraph{Trajectory shape: delayed-gain pattern (single seed).} The
DeiT-style augmentation stack changes the trajectory rather than simply
raising the endpoint. For PolarAdamW, augmentation initially lowers validation
accuracy by more than $9$ percentage points at epoch $25$ (minimal
$64.88\%$ vs matched-aug $55.46\%$), reflecting the harder mixed-label
and RandAugment objective. The augmented run catches the minimal recipe
around epoch $65$ and then pulls away, reaching L10 $= 80.67\%$ versus
$73.11\%$ for the minimal recipe. Thus the recipe effect is a
delayed-gain phenomenon: augmentation acts as an early optimisation
burden but yields a substantially better late-training regime. The
same qualitative pattern appears for Muon, whose recipe lift is
comparable. The crossover and four-seed minimal-recipe trajectories
are shown in \cref{fig:minimal-traj}.

\begin{figure}[h]
\centering
\includegraphics[width=0.8\linewidth]{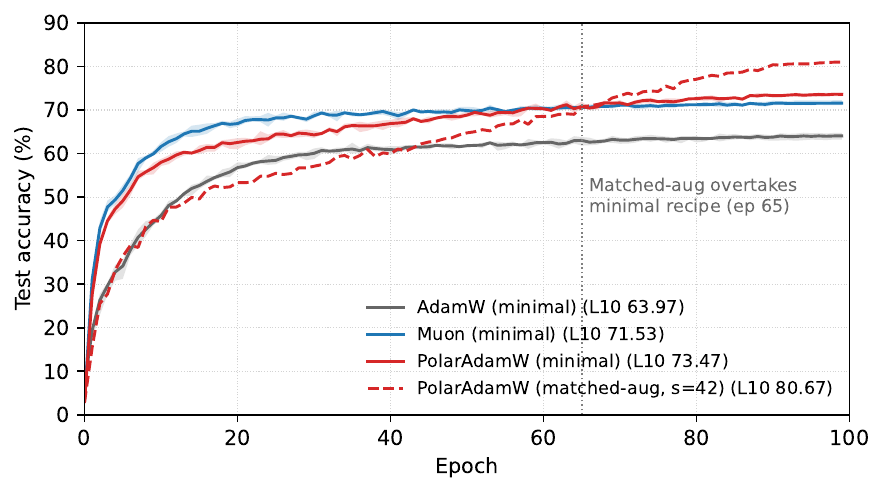}
\caption{DeiT-Tiny on four independently sampled $100$-class
ImageNet-1k subsets, $100$ ep. Solid curves are 4-seed means with
min/max envelopes for AdamW (gray), Muon (blue), and PolarAdamW
(red) under the minimal recipe; end-of-training L10 values match
the corresponding column of \cref{tab:deit-minimal}. The dashed
red curve is PolarAdamW under the DeiT-style augmentation stack
(matched-aug, single-seed at $s{=}42$); the vertical dotted line
marks ep $\sim 65$ where the matched-aug trajectory crosses its own
minimal-recipe envelope. Augmentation initially depresses accuracy
relative to the minimal recipe and then accelerates past it.}
\label{fig:minimal-traj}
\end{figure}

The matched-aug PolarAdamW result is single-seed at one lr; the
recipe-lift comparison ($+7.56$ vs $+7.94$ vs $+3.19$ pp) is the
observation here: in this audit, the matrix-structured polar
optimisers show a larger recipe lift than AdamW, with the lift
being delayed and only visible in the late-training trajectory.

\subsection*{Phase S2: $300$-epoch DeiT-style audit}
\label{app:recipe-audit-300ep}

To test whether the optimiser ordering at $100$ epochs survives a longer
DeiT-style training budget, the Phase~S1 recipe is extended to $300$
epochs (cosine schedule defined over $300$ epochs from the start),
matched across optimisers in everything except the per-arm learning
rate, with each optimiser at the lr that won its single-seed
minimal-recipe lr-scan: AdamW $5\times10^{-4}$, Muon $5\times10^{-3}$,
PolarAdamW $5\times10^{-3}$. All other hyperparameters identical to
Phase~S1. All three optimisers are run at four seeds
$s \in \{42, 7, 123, 2024\}$. Per-seed L10 (mean test accuracy over
ep $290$--$299$):

\begin{center}
\begin{tabular}{lcccc}
\toprule
seed & AdamW & Muon & PolarAdamW & $\Delta$ (Polar $-$ Muon) \\
\midrule
$42$    & $79.80$ & $83.72$ & $\mathbf{85.03}$ & $+1.31$ \\
$7$     & $79.95$ & $83.82$ & $\mathbf{85.42}$ & $+1.60$ \\
$123$   & $79.42$ & $84.97$ & $\mathbf{86.10}$ & $+1.13$ \\
$2024$  & $80.75$ & $85.14$ & $\mathbf{86.56}$ & $+1.42$ \\
\midrule
4-seed mean & $79.98$ & $84.41$ & $\mathbf{85.78}$ & $+1.37$ \\
\bottomrule
\end{tabular}
\end{center}

\noindent\emph{ImageNet experiments use random 100-class subsets of
ImageNet-1k, not the full dataset; absolute accuracies are
subset-conditional and are not benchmark claims. Paired within-subset
gaps are the primary measurement (see Limitations,
\cref{sec:open}).}

\paragraph{Reading.} At $300$ epochs under the DeiT-style recipe,
PolarAdamW remains ahead of Muon across all four seeds (mean
$\Delta = +1.37$ pp, all four positive). The mean $\Delta$ here is
smaller than the $+1.93$ pp at the $100$-epoch minimal recipe, but
recipe and budget both change between those two comparisons, so the
difference is not a clean budget-only effect. Every seed remains
positive at the longer budget. The narrowing is also compatible with
the within-run late-cosine catchup of PolarAdamW over Muon in
\cref{fig:audit-300ep-trajectory}: ``PolarAdamW gains on Muon
late within a run'' and ``the paired endpoint gap is larger at one
budget than another'' are claims on different axes. Going from the $100$-epoch DeiT-style recipe to the
$300$-epoch DeiT-style recipe at $s{=}42$ lifts AdamW by $+13.53$ pp, Muon
by $+4.65$ pp, and PolarAdamW by $+4.36$ pp. The matrix-vs-AdamW gap
is preserved at the long budget: across all four seeds, PolarAdamW
exceeds AdamW by $+5.80$ pp on average and Muon by $+4.43$ pp
(4-seed means $85.78$, $84.41$, $79.98$).

\section{\texttt{SO3EquivariantModel}: per-block walkthrough}
\label{app:so3-arch}

For SO(3), the irreducibles are $V_\ell$ of dimension $2\ell+1$:
$V_0 = \mathbb{R}$ (scalars; trivial action), $V_1 = \mathbb{R}^3$ (vectors;
rotated by $R$), $V_2$ (5-dim symmetric traceless 2-tensors), and so on.
\texttt{SO3EquivariantModel}
(\texttt{pointcloud\_\allowbreak so3/\allowbreak models.py}) uses
only types 0 and 1: each point in the cloud carries $m_0$ scalar
channels and $m_1$ vector channels, so its per-point feature space is
$V_0^{\oplus m_0} \oplus V_1^{\oplus m_1}$. We take
$m_0 = m_1 = \mathrm{hc}$ (the \texttt{hidden\_channels} flag, varied
over $\{16, 32, 64, 128\}$ in the cross-axis audit reported in
\cref{sec:polar-adamw-double}) for the generic equivariant layer; the
first layer takes $m_0^{\mathrm{in}} = m_1^{\mathrm{in}} = 1$ since
the per-point initial features are $\|x_i\|^2$ (a scalar) and $x_i$
itself (a vector). Each layer has three Schur-structured trainable
matrices:

\begin{enumerate}
\item[(i)] \textbf{Scalar mixer $W_{00}$.} An \texttt{nn.Linear($m_0$, $m_0$)},
weight $W_{00}.\mathrm{weight} \in \R^{m_0 \times m_0}$. By Schur's lemma,
$\Hom_G(V_0^{\oplus m_0}, V_0^{\oplus m_0}) \cong \R^{m_0 \times m_0}$
because $V_0$ is one-dimensional and the $\otimes\, \mathrm{id}_{V_0}$ is
invisible. \emph{The PyTorch tensor and the multiplicity matrix
$B_{\lambda=0}$ coincide.}

\item[(ii)] \textbf{Vector mixer $W_{11}$.} A na\"ive linear map
$\R^{m_1 \times 3} \to \R^{m_1 \times 3}$ would have $9 m_1^2$ entries, but
SO(3)-equivariance forbids most of them. By Schur,
$\Hom_G(V_1^{\oplus m_1}, V_1^{\oplus m_1}) \cong \R^{m_1 \times m_1}
\otimes \mathrm{id}_{V_1}$, so only $m_1^2$ free parameters survive: a
single matrix $B \in \R^{m_1 \times m_1}$ whose action on
$V_1^{\oplus m_1}$ is $B \otimes I_3$. The code stores exactly this:
\begin{verbatim}
# multiplicity matrix, not the full (3m, 3m) ambient form
self.W_11 = nn.Parameter(torch.empty(m1_out, m1_in))
v_out = torch.einsum('oi,bnid->bnod', self.W_11, vectors_in)
\end{verbatim}
The \texttt{einsum} \emph{is} $B \otimes I_3$: the same scalar entry
$B_{oi}$ multiplies all three components of vector channel~$i$. The
PyTorch tensor and the multiplicity matrix $B_{\lambda=1}$ coincide; the
$\otimes I_3$ is implicit in the indexing.

\item[(iii)] \textbf{Clebsch--Gordan merge $W_{\mathrm{cg}}$.} The tensor
product decomposes as $V_1 \otimes V_1 = V_0 \oplus V_1 \oplus V_2$. The
network keeps only the $V_0$ part: pairwise dot products among the $m_1$
output vectors give $n_{\mathrm{dot}} = m_1(m_1+1)/2$ extra scalar
features, which are concatenated with the existing $m_0$ scalars and
mixed back to $m_0$ scalars by an
\texttt{nn.Linear($m_0 + n_{\mathrm{dot}}$, $m_0$)}. This is again a
$V_0 \to V_0$ map, hence another multiplicity matrix
$B_{\lambda=0}^{(\mathrm{cg})} \in \R^{m_0 \times (m_0 + n_{\mathrm{dot}})}$.
\end{enumerate}

Beyond these per-layer Schur blocks, the model contains auxiliary
components: a distance-weighted message MLP and norm-gated vector
nonlinearities (within each layer), between-layer residual connections,
and the final invariant readout head. These
components are equivariant or invariant by construction, but they are
not the Schur blocks used in the basis-equivariance argument of
\cref{thm:basis-equiv}. In the optimiser experiments,
Muon/PolarAdamW are applied to all eligible two-dimensional arrays
selected by the parameter split, including auxiliary MLP weights; the
Schur-gauge discussion concerns the multiplicity-space arrays
$W_{00}$, $W_{11}$, and the scalar-sector $W_{\mathrm{cg}}$
(stored as \texttt{cg\_proj} in the code).

Thus $W_{00}$ and $W_{11}$ are Schur multiplicity matrices in the
strict sense. The $W_{\mathrm{cg}}$ weight is a scalar-sector
multiplicity matrix applied after the Clebsch--Gordan dot-product
features have been formed. These are the matrix-valued trainable
arrays in which the Schur-gauge structure appears explicitly; no
extraction from a larger ambient equivariant tensor is needed, and
the PyTorch tensors correspond exactly to the multiplicity matrices
$B_\lambda$.

\section{Detailed proofs}
\label{app:proofs}

We present the proofs of \cref{lem:block-polar},
\cref{thm:basis-equiv}, and
\cref{prop:adamw-not-equiv} in detail.

\subsection{Proof of \cref{lem:block-polar}}
\label{app:proofs-lem-block-polar}

\noindent\textbf{\cref{lem:block-polar}}
(\emph{Polar respects direct sums}).~\emph{Let $G = \bigoplus_\lambda
G_\lambda \otimes I_{V_\lambda}$ be an equivariant gradient (or
momentum) tensor, with
$G_\lambda \in \R^{n_\lambda \times m_\lambda}$. Define
$\polar_{\mathrm{block}}(G) := \bigoplus_\lambda \polar(G_\lambda)
\otimes I_{V_\lambda}$. Then $\polar_{\mathrm{block}}(G)$ is again an
equivariant tensor and, under the same canonical polar convention on
rank-deficient blocks, coincides with the ordinary polar $\polar(G)$
taken on the full matrix $G$ written in any basis adapted to the
isotypic decomposition. The same structural identity holds with
$\mathrm{NS}_k$ in place of $\polar$ in exact arithmetic with a common
normalisation; the bf16 implementation is treated empirically in
\cref{app:rotation-test}.}

\begin{proof}
Let each multiplicity-space block have a full SVD
$G_\lambda = U_\lambda \Sigma_\lambda V_\lambda^\top$ with
$U_\lambda \in O(n_\lambda)$, $V_\lambda \in O(m_\lambda)$, and
$\Sigma_\lambda \in \R_{\ge 0}^{n_\lambda \times m_\lambda}$
rectangular diagonal.
Tensoring with $I_{V_\lambda}$ distributes over matrix multiplication,
$(AB) \otimes (CD) = (A \otimes C)(B \otimes D)$ whenever the products
$AB$ and $CD$ are defined, so
\[
  G_\lambda \otimes I_{V_\lambda}
  = (U_\lambda \otimes I_{V_\lambda})\,
    (\Sigma_\lambda \otimes I_{V_\lambda})\,
    (V_\lambda \otimes I_{V_\lambda})^\top.
\]
The outer factors are themselves orthogonal: for any $U \in O(n)$,
$(U \otimes I)^\top (U \otimes I) = (U^\top U) \otimes I = I$. The
middle factor $\Sigma_\lambda \otimes I_{V_\lambda}$ is a rectangular
diagonal nonnegative matrix, with each singular value of $G_\lambda$
repeated $\dim V_\lambda$ times, so it is a valid rectangular
diagonal SVD factor.

Direct sums of orthogonal matrices are orthogonal, and direct sums of
rectangular diagonal nonnegative matrices are again rectangular
diagonal nonnegative, so assembling the multiplicity-space SVDs
across irreps gives a valid SVD of $G$:
\[
  G = \Bigl(\bigoplus_\lambda U_\lambda \otimes I_{V_\lambda}\Bigr)\,
      \Bigl(\bigoplus_\lambda \Sigma_\lambda \otimes I_{V_\lambda}\Bigr)\,
      \Bigl(\bigoplus_\lambda V_\lambda \otimes I_{V_\lambda}\Bigr)^\top.
\]
With the canonical polar convention fixed in the statement, reading
off this SVD gives
\[
  \polar(G)
  = \bigoplus_\lambda
    (U_\lambda \otimes I_{V_\lambda})(V_\lambda \otimes I_{V_\lambda})^\top
  = \bigoplus_\lambda (U_\lambda V_\lambda^\top) \otimes I_{V_\lambda}
  = \bigoplus_\lambda \polar(G_\lambda) \otimes I_{V_\lambda},
\]
which is the block-polar operator $\polar_{\mathrm{block}}(G)$. It is
$G$-equivariant because each summand has the intertwining form
$\polar(G_\lambda) \otimes I_{V_\lambda}$.

For the Newton--Schulz extension, one iteration acts as
$X \mapsto X \cdot p(X^\top X)$ for the polynomial
$p(t) = a + bt + ct^2$. Both $X^\top X$ and $X\,p(X^\top X)$ respect
block-diagonality and the $A \otimes I$ structure, since
$(A \otimes I)(B \otimes I) = AB \otimes I$. Iterating $k$ times
preserves the structure, giving
$\mathrm{NS}_k(G) = \bigoplus_\lambda \mathrm{NS}_k(G_\lambda) \otimes
I_{V_\lambda}$. Each summand again has the intertwining form
$B_\lambda \otimes I_{V_\lambda}$.
\end{proof}

\subsection{Proof of \cref{thm:basis-equiv}}
\label{app:proofs-thm}

\noindent\textbf{\cref{thm:basis-equiv}}
(\emph{Muon is multiplicity-basis equivariant}).~\emph{For any
$G_\lambda \in \R^{n_\lambda \times m_\lambda}$ and any orthogonal
$P_\lambda \in O(n_\lambda)$, $Q_\lambda \in O(m_\lambda)$,
\[
\polar(P_\lambda\, G_\lambda\, Q_\lambda^\top)
\;=\;
P_\lambda \cdot \polar(G_\lambda) \cdot Q_\lambda^\top,
\]
and the same identity holds with $\mathrm{NS}_k$ in place of $\polar$.}

\begin{proof}
We drop the subscript $\lambda$ throughout for clarity. Let
$G \in \R^{n \times m}$ have a full SVD $G = U \Sigma V^\top$, with
$U \in O(n)$, $V \in O(m)$, and
$\Sigma \in \R^{n \times m}$ rectangular diagonal nonnegative.
Substituting and using $V^\top Q^\top = (QV)^\top$,
\[
  P G Q^\top
  = P(U \Sigma V^\top)Q^\top
  = (PU)\,\Sigma\,(QV)^\top.
\]
The factors $PU$ and $QV$ are orthogonal: $(PU)^\top(PU) =
U^\top P^\top P U = U^\top U = I$, and similarly for $QV$. The middle
factor $\Sigma$ is unchanged, so $(PU,\,\Sigma,\,QV)$ is a valid SVD
of $PGQ^\top$. Under the same polar convention used throughout,
reading the polar factor off this SVD gives
\[
  \polar(P G Q^\top)
  = (PU)(QV)^\top
  = P\cdot(UV^\top)\cdot Q^\top
  = P\cdot\polar(G)\cdot Q^\top,
\]
which proves the polar identity.

For the Newton--Schulz extension, let one iteration be
$T(X) = X\,p(X^\top X)$ with $p(t) = a + bt + ct^2$. Then
$(PXQ^\top)^\top(PXQ^\top) = QX^\top X\,Q^\top$, and since $p$ is a
polynomial, $p(QAQ^\top) = Q\,p(A)\,Q^\top$. Hence
\begin{align*}
  T(PXQ^\top)
  &= (PXQ^\top)\,p(QX^\top X\,Q^\top) \\
  &= PXQ^\top\,Q\,p(X^\top X)\,Q^\top \\
  &= P\,X\,p(X^\top X)\,Q^\top
  \;=\; P\,T(X)\,Q^\top,
\end{align*}
where the third equality uses $Q^\top Q = I$. Iterating this
identity $k$ times gives $\mathrm{NS}_k(PGQ^\top) = P\,\mathrm{NS}_k(G)\,Q^\top$.
\end{proof}

\subsection{Proof of \cref{prop:adamw-not-equiv}}
\label{app:proofs-prop}

\noindent\textbf{\cref{prop:adamw-not-equiv}}
(\emph{AdamW is Schur gauge-dependent}).~\emph{Let
$\varepsilon > 0$, and let
$\bigl(\rho_\varepsilon(M)\bigr)_{ij} := M_{ij} / (|M_{ij}| +
\varepsilon)$ denote the element-wise sign-like AdamW preconditioned
direction in the limiting case $\sqrt{\hat v_{ij}} = |M_{ij}|$. For orthogonal
changes of basis $P \in O(n)$, $Q \in O(m)$, the identity
\[
\rho_\varepsilon(P M Q^\top) \;=\; P\,\rho_\varepsilon(M)\,Q^\top
\]
does not hold in general.
}

\begin{proof}
We show this by constructing a counterexample. Take $n = m = 2$,
$M = I_2$, $Q = I_2$, and
$P = R(\pi/4) = \tfrac{1}{\sqrt 2}\bigl(\begin{smallmatrix} 1 & -1 \\
1 & 1 \end{smallmatrix}\bigr)$ (rotation by $\pi/4$). For any
$\varepsilon > 0$,
\[
  \rho_\varepsilon(I_2) \;=\; \tfrac{1}{1 + \varepsilon}\,I_2,
\]
hence
$P\,\rho_\varepsilon(M)\,Q^\top = \tfrac{1}{1+\varepsilon}\,P$.
On the other hand, $PMQ^\top = P$, whose four entries all have
magnitude $1/\sqrt{2}$, so each entry of
$\rho_\varepsilon(P) = \rho_\varepsilon(PMQ^\top)$ takes the form
$P_{ij} / (1/\sqrt{2} + \varepsilon) =
\tfrac{\sqrt{2}}{1 + \sqrt{2}\,\varepsilon}\,P_{ij}$, giving
\[
  \rho_\varepsilon(PMQ^\top)
  \;=\; \tfrac{\sqrt{2}}{1 + \sqrt{2}\,\varepsilon}\,P.
\]
The two scalar factors satisfy $\tfrac{\sqrt{2}}{1+\sqrt{2}\varepsilon}
\neq \tfrac{1}{1+\varepsilon}$ for every $\varepsilon > 0$, since
equality would require $\sqrt{2}(1+\varepsilon) = 1 +
\sqrt{2}\,\varepsilon$, i.e., $\sqrt{2} = 1$. Therefore
$\rho_\varepsilon(PMQ^\top) \neq P\,\rho_\varepsilon(M)\,Q^\top$ at
this $(P, M, Q)$ for every $\varepsilon > 0$.
\end{proof}

\section{Numerical verification of Schur gauge-equivariance}
\label{app:rotation-test}

We numerically check the covariance identity of
\cref{thm:basis-equiv} across a range of matrix shapes: the
multiplicity-matrix shapes that appear in
\texttt{SO3EquivariantModel(hidden\_channels=8)}, three representative
DeiT-Tiny shapes, and additional small / medium / large square and
non-square shapes for breadth. For each shape, we sample $50$ random
$(G, P, Q)$ triples with $G \sim \mathcal{N}(0, I)$ and $P, Q$
Haar-orthogonal, then compute the conjugation deviation
\[
\Delta(\phi; G, P, Q) \;:=\;
\frac{\|\phi(P G Q^\top) - P\,\phi(G)\,Q^\top\|_\Frob}
     {\|\phi(G)\|_\Frob + 10^{-12}}.
\]
\cref{thm:basis-equiv} implies $\Delta \approx 0$ for
$\phi = \polar$ and for $\phi = \mathrm{NS}_k$;
\cref{prop:adamw-not-equiv} shows that no such covariance
identity holds for $\phi = \rho_\varepsilon$ at any $\varepsilon > 0$,
and the sampled deviations below are order one. The
$\Delta(\rho_0)$ column reports the near-sign limit, computed with
$\varepsilon = 10^{-30}$ only to avoid division by zero; the
deviation persists across all $\varepsilon > 0$.

\begin{table}[h!]
\centering
\small
\caption{Conjugation deviation $\Delta$ across Schur block, DeiT, and
synthetic matrix shapes; mean over 50 random $(G, P, Q)$ triples per
shape.}
\label{tab:basis-test}
\input{results/basis_equiv_table.tex}
\end{table}

Exact polar and fp64 Newton--Schulz are at numerical precision,
confirming \cref{thm:basis-equiv}. Production bf16
Newton--Schulz (as Muon actually runs) shows $\sim 3$--$5\%$
deviation, which reflects bf16 mantissa noise rather than failure of
the covariance identity; the fp64 implementation is at numerical
precision. The AdamW-style $\rho_0$ column has order-one deviation,
matching \cref{prop:adamw-not-equiv}.

\section{Recipe and data details}
\label{app:recipe-data}

\paragraph{Minimal recipe (AdamW, Muon, PolarAdamW arms).}
RandomResizedCrop$(224)$, RandomHorizontalFlip, and
ImageNet-statistics normalisation; \texttt{torch.optim.AdamW} with
$\beta_1{=}0.9$, $\beta_2{=}0.999$, $\varepsilon{=}10^{-8}$ at
$\mathrm{lr}_{\mathrm{aux}} = 5\times10^{-4}$ for parameters outside
the eligible matrix-structured split (embeddings, LayerNorm, biases,
positional/class tokens, patch embedding, and the classifier head); Muon and PolarAdamW at
$\mathrm{lr}_{\mathrm{polar}} = 5\times10^{-3}$ on 2D weights inside
\texttt{body.blocks.*}; batch $128$; $100$ epochs; 5-epoch linear
warmup from $\eta_0 = 10^{-3} \cdot \mathrm{lr}$ into cosine decay to
zero. No Mixup, CutMix, RandAugment, RandomErasing, label smoothing,
DropPath, EMA, or repeated augmentation. For Muon and PolarAdamW:
Newton--Schulz $5$-iteration with Jordan's quintic coefficients
$(3.4445, -4.7750, 2.0315)$ in bf16, momentum $\beta = 0.95$,
Adam-compat scaling $s = \sqrt{\max(n,m)/\min(n,m)}$ on the polar
update.

\emph{Weight decay.} The standalone AdamW arm uses decoupled
$\mathrm{wd}=0.05$ on all parameters. The Muon and PolarAdamW
hybrids apply $\mathrm{wd}=0.05$ to the auxiliary AdamW step on
non-matrix parameters. PolarAdamW additionally applies $0.05$ to its
matrix-parameter step (decoupled, AdamW-style). The Muon
matrix-parameter step itself uses $\mathrm{wd}=0$, the default of the
reference implementation~\citep{Jordan2024} (the class accepts a
\texttt{weight\_decay} argument). This is the only
inter-arm asymmetry in the weight-decay configuration.

\paragraph{Per-seed class subsets and overlap matrix.}
Each seed selects an independent $100$-class subset of ImageNet-1k via
a seeded random sample without replacement; the chosen indices are
stored at \texttt{datasets/\allowbreak imagenet-100/\allowbreak classes\_s\{seed\}.txt}. Pairwise class
overlap (number of shared classes out of $100$) across the four seeds
used in this paper:

\begin{center}
\small
\begin{tabular}{lcccc}
\toprule
& s=42 & s=7 & s=123 & s=2024 \\
\midrule
s=42   & 100 & 10  & 17  & 9   \\
s=7    & --- & 100 & 9   & 4   \\
s=123  & --- & --- & 100 & 6   \\
s=2024 & --- & --- & --- & 100 \\
\bottomrule
\end{tabular}
\end{center}

Mean overlap (off-diagonal) is $9.2$ out of $100$ ($9.2\%$); range is
$4$--$17$ shared classes. The per-seed split is therefore close to an
independent $100$-class task, and the multi-seed mean of paired gaps
estimates robustness across class samplings as well as across optimiser
randomness.

\paragraph{Same class subset across $100$-ep and $300$-ep runs.}
The $300$-epoch audit (\cref{app:recipe-audit-300ep}) uses the same
seed-$42$ random $100$-class ImageNet-1k subset as the corresponding
$100$-epoch experiment, so differences between these runs reflect recipe
and budget changes rather than a change in class subset.

\paragraph{Step-time measurement (DeiT-Tiny).}
Per-epoch wall-clock time was measured on a single NVIDIA A6000 with
batch size $128$ on DeiT-Tiny under the DeiT-style recipe, $300$-epoch
schedule (job 5133468). Mean per-epoch wall: AdamW $174$ s/ep, Muon
$183$ s/ep ($+4.7\%$ vs AdamW), PolarAdamW $185$ s/ep ($+6.0\%$ vs
AdamW; $+1.3\%$ vs Muon). The Newton--Schulz $5$-iteration step
(Jordan quintic coefficients, \texttt{bf16}) is the main additional
operation in PolarAdamW relative to AdamW; PolarAdamW's per-iteration
wall is essentially equal to Muon's, supporting the
``comparable to Muon'' claim in the abstract. We do not normalise by
FLOPs because the polar step's cost is batch-size invariant and its
relative overhead therefore depends on batch size; we report the
wall-clock number that actually obtains in the DeiT comparisons.

\paragraph{Step-time measurement (SO(3)-equivariant model).}
Per-run wall-clock time on the SO(3) testbed was measured across the
$100$-seed audit on the cluster's eight-short partition (4 CPUs, $4$
GB), $100$ epochs per run. Mean per-run wall (s, $100$ seeds per
cell):

\begin{center}
\begin{tabular}{lccc}
\toprule
$h_c$ & AdamW & Muon & PolarAdamW \\
\midrule
$16$  & $629$   & $657$ ($+4.6\%$)    & $590$ ($-6.1\%$ vs AdamW; $-10.2\%$ vs Muon) \\
$32$  & $1353$  & $1445$ ($+6.8\%$)   & $1392$ ($+2.8\%$ vs AdamW; $-3.7\%$ vs Muon) \\
$64$  & $4606$  & $4915$ ($+6.7\%$)   & $5151$ ($+11.8\%$ vs AdamW; $+4.8\%$ vs Muon) \\
$128$ & $17768$ & $23025$ ($+29.6\%$) & $21068$ ($+18.6\%$ vs AdamW; $-8.5\%$ vs Muon) \\
\bottomrule
\end{tabular}
\end{center}

Within-cell standard deviation is $\approx 12$--$21\%$ of the mean,
reflecting cluster contention on shared CPU nodes. At $h_c\in\{16,32,64\}$,
the cross-optimiser differences sit within or near the within-cell standard
deviation, so we do not infer a reliable optimiser-side timing ordering at
smaller capacities. At $h_c=128$, polar-step overhead is visible above the
noise floor: Muon runs $+29.6\%$ slower than AdamW and PolarAdamW $+18.6\%$,
with PolarAdamW faster than Muon in the paired timing comparison
($\Delta_{\mathrm{Polar-Muon}}=-1957$ s, $t=-3.36$). Because the Muon and
PolarAdamW implementations use the same Newton--Schulz routine, we treat this
last comparison as a wall-clock observation rather than evidence of a stable
algorithmic cost ordering. The \texttt{cg\_proj} layer
(\cref{app:so3-arch}, $O(h_c^4)$) dominates per-run compute at
smaller capacities; polar-step overhead becomes visible only at the largest
audited width.

\paragraph{Hyperparameter tuning budget.}
Each optimiser was tuned via a single-seed ($s{=}42$) lr-scan on the
minimal recipe at $100$ epochs, with the winning lr carried forward to
all multi-seed cells and to the DeiT-style / long-budget audits. The
scanned grids: AdamW $\mathrm{lr} \in \{1, 3, 5, 10\}\times 10^{-4}$
(winner $5\times 10^{-4}$); Muon $\mathrm{lr} \in \{0.002, 0.005,
0.01, 0.02\}$ (winner $0.005$); PolarAdamW scanned the same grid as
Muon (winner $0.005$). For DeiT, decoupled weight decay is $0.05$ on
AdamW parameters and on the PolarAdamW matrix step; Muon's matrix step
uses no weight decay (the default of the reference implementation,
which accepts a \texttt{weight\_decay} argument; SO(3) arms use weight
decay $0$ throughout, see \cref{sec:experiments-equivariant}). The remaining hyperparameters are
held fixed across optimisers and not tuned per-arm: $5$-epoch linear
warmup, cosine decay, $\beta_1{=}0.9$, $\beta_2{=}0.999$,
$\varepsilon{=}10^{-8}$. An additional AdamW lr-scaling audit at the
DeiT-style recipe (matched-aug, $100$ ep, single-seed at $s{=}42$, $\mathrm{lr}
\in \{6.25\times 10^{-5}, 1.77\times 10^{-4}\}$ as commonly recommended
for batch-size-scaled AdamW) yielded L10 $54.71$ and $63.63$
respectively, both well below the $\mathrm{lr}=5\times 10^{-4}$ audit
($66.27$). Within this tuning
budget, the AdamW baseline is not explained by an obviously poor
learning-rate choice; the $+1.93$ pp four-seed mean PolarAdamW
advantage remains after the AdamW lr audits.

\end{document}

%% file: results/basis_equiv_table.tex
\begin{tabular}{lcccc}
\toprule
Shape $(n,m)$ & $\Delta(\polar)$ & $\Delta(\mathrm{NS}_5^{\mathrm{fp64}})$ & $\Delta(\mathrm{NS}_5^{\mathrm{bf16}})$ & $\Delta(\rho_0)$ \\
\midrule
W\_00 / W\_11 layer 0 (8,1) & 4.0e-08 & 4.0e-08 & 3.2e-02 & 0.78 $\pm$ 0.18 \\
W\_00 / W\_11 layer k (8,8) & 1.7e-07 & 2.3e-07 & 5.4e-02 & 0.86 $\pm$ 0.06 \\
cg\_proj (8,44) & 1.4e-07 & 1.8e-07 & 4.6e-02 & 0.85 $\pm$ 0.03 \\
DeiT-Tiny attn.qkv (192,192) & 4.1e-07 & 5.2e-07 & 3.5e-02 & 0.85 $\pm$ 0.00 \\
DeiT-Tiny mlp.fc1 (768,192) & 3.2e-07 & 5.2e-07 & 3.7e-02 & 0.85 $\pm$ 0.00 \\
tiny square (4,4) & 1.4e-07 & 1.9e-07 & 5.2e-02 & 0.79 $\pm$ 0.13 \\
small square (16,16) & 2.1e-07 & 2.5e-07 & 4.5e-02 & 0.85 $\pm$ 0.03 \\
medium square (32,32) & 2.6e-07 & 3.1e-07 & 4.3e-02 & 0.85 $\pm$ 0.02 \\
medium-large square (128,128) & 3.7e-07 & 4.5e-07 & 3.6e-02 & 0.85 $\pm$ 0.00 \\
tall non-square (4,16) & 1.1e-07 & 1.7e-07 & 5.3e-02 & 0.85 $\pm$ 0.07 \\
wide non-square (16,4) & 1.1e-07 & 1.7e-07 & 5.0e-02 & 0.84 $\pm$ 0.07 \\
tall non-square (8,32) & 1.4e-07 & 2.0e-07 & 4.7e-02 & 0.85 $\pm$ 0.03 \\
wide non-square (32,8) & 1.5e-07 & 2.2e-07 & 4.8e-02 & 0.86 $\pm$ 0.03 \\
DeiT-Tiny mlp.fc2 (192,768) & 3.2e-07 & 5.2e-07 & 3.7e-02 & 0.85 $\pm$ 0.00 \\
large square (768,768) & 5.4e-07 & 5.9e-07 & 3.1e-02 & 0.85 $\pm$ 0.00 \\
\bottomrule
\end{tabular}